\documentclass[10pt]{article} 
\usepackage[accepted]{tmlr}


\usepackage{hyperref}
\usepackage{url}
\usepackage{amsmath}
\usepackage{amssymb}
\usepackage{graphicx}
\usepackage{subcaption}
\usepackage{booktabs} 
\usepackage{amsthm}
\usepackage[capitalize,noabbrev]{cleveref}

\usepackage{todonotes}
\makeatletter
\newcommand{\citecomment}[2][]{\citenum{#2}#1\citevar}
\newcommand{\citeone}[1]{\citecomment{#1}}
\newcommand{\citetwo}[2][]{\citecomment[,~#1]{#2}}
\newcommand{\citevar}{\@ifnextchar\bgroup{;~\citeone}{\@ifnextchar[{;~\citetwo}{]}}}
\newcommand{\citefirst}{\@ifnextchar\bgroup{\citeone}{\@ifnextchar[{\citetwo}{]}}}

\makeatother

\theoremstyle{plain}
\newtheorem{theorem}{Theorem}[section]

\newtheorem{lemma}[theorem]{Lemma}

\theoremstyle{definition}

\theoremstyle{remark}
\newtheorem{remark}[theorem]{Remark}

\title{Integrated Variational Fourier Features for Fast Spatial Modelling with Gaussian Processes}


\author{\name Talay M Cheema \email tmc49@cam.ac.uk \\
      \addr Department of Engineering\\
      University of Cambridge
      \AND
      \name Carl Edward Rasmussen \\
      \addr Department of Engineering \\
      University of Cambridge
      }


\newcommand{\F}{\mathcal{F}}
\newcommand{\Lcal}{\mathcal{L}}
\newcommand{\tr}{\text{tr}}
\newcommand{\sgn}{\text{sgn}}
\newcommand{\sinc}{\text{sinc}}


\begin{document}

\clearpage
\maketitle

\begin{abstract}
    Sparse variational approximations are popular methods for scaling up inference and learning in Gaussian processes to larger datasets. For $N$ training points, exact inference has $O(N^3)$ cost; with $M \ll N$ features, state of the art sparse variational methods have $O(NM^2)$ cost. Recently, methods have been proposed using more sophisticated features; these promise $O(M^3)$ cost, with good performance in low dimensional tasks such as spatial modelling, but they only work with a very limited class of kernels, excluding some of the most commonly used. In this work, we propose integrated Fourier features, which extends these performance benefits to a very broad class of stationary covariance functions. We motivate the method and choice of parameters from a convergence analysis and empirical exploration, and show practical speedup in synthetic and real world spatial regression tasks.
\end{abstract}

\section{Introduction}
    Gaussian processes (GPs) are probabilistic models for functions widely used in machine learning applications where predictive uncertainties are important -- for example, in active learning, Bayesian optimisation, or for risk-aware forecasts \citep{rasmussen06,hennig2022,garnett2023}. The hyperparameters of these models are often learnt by maximising the marginal likelihood, so it is important that this quantity can be evaluated fairly cheaply, especially to facilitate comparison of multiple models, or multiple random restarts for robustness. Yet, for $N$ datapoints, the time cost is $O(N^3)$, associated with calculating the precision matrix of the data and its determinant. This is prohibitively large for many datasets of practical interest. A particularly important class of problems is spatial modelling, where often Gaussian process regression is applied in low (2-4) dimensions to very large datasets, and ideally the choice of prior process, encapsulated by the prior covariance function, is guided by domain-specific knowledge.

    One popular approach for improving scalability is to use a sparse variational approximation, wherein $M < N$ inducing 
    features are used as a compact representation, and a lower bound of the log marginal likelihood is maximised. In the conjugate setting, where the measurement model is affine with additive, white, and Gaussian noise, the variationally optimal distribution of the inducing features is available in closed form \citep[SGPR;][]{titsias2009}. This reduces the size of the precision matrix from $N \times  N$ to $M \times  M$, but in practice multiplications by the cross covariance matrix between the data and the features dominates the computation cost at $O(NM^2)$. 
    
    SGPR with inducing points -- where the features are point evaluations of the latent function -- can be interpreted as replacing data measured with independent and identical noise with a reduced pseudo-dataset measured at different locations and with correlated and variable noise. This pseudo-dataset is selected to minimise distortion in the posterior distribution over functions (or, equivalently, minimum discrepancy between the lower bound and the log marginal likelihood). This is particularly advantageous in the common case that the data is oversampled, where it is possible to set $M \ll N$ with asymptotically vanishing distortion \citep{burt2019,burt2020}. Inducing points are the state of the art solution, but the scaling with $N$ is problematic. One popular way to avoid this is to use batches of data \citep[SVGP;][]{hensman15}. But this necessitates the use of stochastic, typically first order, optimisers; in the conjugate setting, this leads to iterative learning of the variational distribution which is otherwise available in closed form.

    Ideally, we would like to find an approximate inference method which avoids the $O(N)$ scaling for \textit{any} dataset and \textit{any} prior by careful design of the inducing features. But this is more generality than we can reasonably expect, and methods are generally restricted in the prior covariance functions they support, which in turn restricts the freedom of modellers. Existing work on zonal kernels on spherical domains and tensor products of Matérn kernels on rectangular subsets of $\mathbb{R}^D$ give a recipe for taking the $O(N)$ part of the computation  outside of the optimisation loop for low dimensional datasets \citep{dutordoir2020,hensman17,cunningham2023}.
    
    In this work, we propose integrated variational Fourier features (IFF), which provide the same computational benefits, but for a much broader class of sufficiently regular stationary kernels on $\mathbb{R}^D$.\footnote{We assume the kernel's spectral density has bounded second derivative.} We achieve this by allowing for modest numerical approximations in the evaluation of the learning objective and posterior predictives, rather than searching for mathematically exact methods. Yet, in contrast to those previous approaches, we also provide convergence guarantees. This both provides reassurance that the number of features required scales well with the size of the training data, and shows that the numerical approximations do not significantly compromise performance.

    In \cref{sec:background} we review variational GP regression in the conjugate setting, and we review related work in \cref{sec:related}. In \cref{sec:iff} we present our IFF method, and the complexity analysis; the main convergence results and guidance for tunable parameter selection follows in \cref{sec:iff_convergence}. Finally in \cref{sec:experiments} we evaluate our method experimentally, showing significant speedup relative to SGPR in low dimensions, and competitive performance compared to other fast methods, with broader applicability. A summary of our contributions is as follows.
    \begin{itemize}
        \item We present a new set of variational features for Gaussian process regression, whose $O(M^2)$ memory cost and $O(M^3)$ per optimisation step computational cost greatly increases scalability for low dimensional problems compared to standard approaches -- demonstrated on large scale regression datasets -- and can be applied using a broad class of stationary covariance functions on $\mathbb{R}^D$.
        \item We provide converge results demonstrating the number of features required for an arbitrarily good approximation to the log marginal likelihood grows sublinearly for a broad class of covariance functions.
        \item We provide reasonable default choices of parameters in our algorithm, including the number of inducing features $M$, based on an empirical study and motivated by our theoretical results.
    \end{itemize}

\section{Background} \label{sec:background}

In the conjugate setting, the probabilistic model for Gaussian process regression is 
\begin{equation}
    f \sim \mathcal{GP}(0, k) \quad y_n = f(x_n) + \rho_n \quad \rho_n \sim \mathcal{N}(0, \sigma^{2}) \label{eqn:gen}
\end{equation} 
for $n \in \{ 1 : N\}$, with each $x_n \in \mathbb{R}^D$ and $\rho_n, y_n \in \mathbb{R}$, with the covariance function or kernel $k :\mathbb{R}^D \times  \mathbb{R}^D \to \mathbb{R}$ symmetric and positive definite. Let $\mathfrak{f} = [..., f(x_n), ...]^\top$, and $K_{ab}$ be the covariance matrix of finite dimensional random variables $a, b$. For example, $K_{\mathfrak{ff}}$ is the $N \times  N$ matrix with $[K_{\mathfrak{ff}}]_{nn'} = k(x_n, x_{n'})$. The posterior predictive at some collection of inputs $x_*$ and marginal likelihood are as follows, where $x = x_{1:N}, y=y_{1:N}$, and $K_{*\mathfrak{f}}$ is defined analagously to $K_{\mathfrak{ff}}$. Then, the posterior predictive distribution and marginal likelihood are as follows \citep[Chapter~2]{rasmussen06}. 
\begin{align}
    p(f(x_*)|x, y) &= \mathcal{N}(f(x_*)| K_{*\mathfrak{f}}(K_{\mathfrak{ff}}+\sigma^2I)^{-1} y, K_{**} - K_{*\mathfrak{f}}(K_{\mathfrak{ff}}+\sigma^2I)^{-1}K_{\mathfrak{f}*}) \label{eqn:pred_exact} \\
    p(y|x) &= \mathcal{N}(y|0, K_{\mathfrak{ff}}+\sigma^2I) \quad \Lcal = \log p(y|x)
\end{align} 
We optimise the latter with respect to the covariance function's parameters. The data precision matrix $A=(K_{\mathfrak{ff}}+\sigma^2 I)^{-1}$, which depends on the value of the hyperparameters, dominates the cost, as for each evaluation of the log marginal likelihood $\Lcal$, we need to compute its log determinant and the quadratic form $y^\top A y$, both of which incur $O(N^3)$ computational cost in general. Note that the posterior predictive is the prior process conditioned on $y = \mathfrak{f} + \rho$.

For the variational approximation, we construct an approximate posterior\footnote{In a standard minor abuse of notation, we write the distributions over $f$ as densities, though none exist.} $q(f) = \int p(f|u) q(u) du \approx p(f|y)$ where $u = u_{1:M}$ is a collection of inducing features with prior distribution $p(u)$. That is, we condition on $u$ instead of $y$ and average over an optimised distribution on $u$. The classic choice is inducing points, where $u_m = f(z_m)$ for some $z_m \in \mathbb{R}^D$. We maximise a lower bound on the log marginal likelihood ($D_{KL}$ is the KL divergence).
\begin{equation}
    \F = \int q(f) \log\frac{p(y, f, u)}{q(f)}df = \Lcal - D_{KL}(q(f)||p(f)) \leq \Lcal \label{eqn:vfe_generic}
\end{equation}
More generally, $u_m$ is chosen to be a linear functional $\phi_m$ of $f$ \citep{lazarogredilla09}, denoted $\langle \phi_m, f\rangle  \in \mathbb{C}$ with associated parameter $z_m$, in order that $u$ is Gaussian a priori. For features other than inducing points, these are termed \textit{inter-domain} features. Let $\phi_m^*$ be such that $\langle \phi_m^*, f\rangle =\langle \phi_m, f\rangle ^*$ (the complex conjugate) and let $\mathcal{K}$ be the covariance operator corresponding to $k$. That is, $[\mathcal{K}\phi_m](x_*)= \langle \phi_m^*, k(x_*, \cdot )\rangle $. Then \citetext{\citealp[Chapter~2]{bogachev1998}; \citealp{lifshits2012}}
\begin{align*}
    \langle \phi_m, f\rangle  &\sim \mathcal{N}(0, \langle \phi_m, \mathcal{K}\phi_m\rangle ) \\ 
    \mathbb{E}[\langle \phi_m, f\rangle \langle \phi_{m'}, f\rangle ^*] &= \langle \phi_m, \mathcal{K} \phi_{m'}\rangle  \\ 
    \mathbb{E}[f(x_*)\langle \phi_m, f\rangle ] &= \langle \phi_m^*, k(X_*, \cdot )\rangle  
\end{align*}
and for convenience define
\begin{align}
    c_m(x_*) = c(z_m, x_*) &\stackrel{\text{def}}{=}  \langle \phi_m^*, k(x_*, \cdot )\rangle  = c^*(x_*, z_m) = \langle \phi_m, k(\cdot , x_*)\rangle  \\
    \bar{k}_{m,m'} = \bar{k}(z_m, z_{m'}) &\stackrel{\text{def}}{=}  \langle \phi_m, \mathcal{K}\phi_{m'}\rangle  = \langle \phi_m, c(z_m, \cdot )\rangle  = \langle \phi_m^*, c(\cdot , z_m)\rangle 
\end{align}
which give the entries of the covariance matrices $K_{u\mathfrak{f}}$ and $K_{uu}$. With inducing points, $c = \bar{k} = k$. But, more generally, $p(u) = \mathcal{N}(0, K_{uu})$, and the optimal $q(u)$ is available in closed form as
\begin{align}
    q(u) \sim \mathcal{N}(\mu_u, \Sigma_u) \quad \text{ with } \:
    &\Sigma_u^{-1} = K_{uu}^{-1}(K_{uu} + \sigma^{-2}K_{u\mathfrak{f}}K_{u\mathfrak{f}}^*)K_{uu}^{-1}, \nonumber\\
    &\mu_u =\sigma^{-2} \Sigma_{u}K_{uu}^{-1}K_{u\mathfrak{f}}y \label{eqn:qu}
\end{align}
with corresponding training objective \citep{titsias2009}
\begin{align}
    \F(\mu_u, \Sigma_u) = \log \mathcal{N}(y|0, \: K_{u\mathfrak{f}}^*K_{uu}^{-1}K_{u\mathfrak{f}} + \sigma^{2}I) 
     - \dfrac{1}{2}\sigma^{-2} \tr(K_{\mathfrak{ff}} - K_{u\mathfrak{f}}^*K_{uu}^{-1}K_{u\mathfrak{f}})\label{eqn:vfe}
\end{align}
wherein the structured approximation to the data precision is $A' = (K_{u\mathfrak{f}}^*K_{uu}^{-1}K_{u\mathfrak{f}} + \sigma^{2}I)^{-1}$. However, by exploiting standard linear algebra results (\cref{sec:comp}), the inverse and log determinant can be isolated to $B = (K_{uu} + \sigma^{-2} K_{u\mathfrak{f}}K_{u\mathfrak{f}}^*)^{-1}$ (which is the precision of the appropriately noise-corrupted features $u + K_{u\mathfrak{f}} \rho$) and $K_{uu}^{-1}$, both of which are only $M \times  M$. However, in practice, the dominant cost is $O(NM^2)$ to form $K_{u\mathfrak{f}}K_{u\mathfrak{f}}^*$, since generally $M \ll N$ and the cross-covariance matrix depends nonlinearly on the hyperparameters, so must be recalculated each time \cite{burt2020b}. Put differently, the features $c_m(\cdot )$ are dependent on the hyperparameters. By choosing the linear functionals carefully, we aspire to find features which do not depend on the hyperparameters, so that $K_{u\mathfrak{f}}K_{u\mathfrak{f}}^*$ can be precomputed and stored, reducing the cost to $O(M^3)$, without compromising on feature efficiency. 

The posterior predictive at new points $x_*$ is calculated as 
\begin{align}
    q(f(x_*)|x_*, x, y) &= \int p(f(x_*)|x, z, u) q(u)\, du \nonumber \\
                        &= \mathcal{N}(f(x_*)|K_{u*}^* K_{uu}^{-1}\mu_u, \: K_{**} - K_{u*}^*K_{uu}^{-1}K_{u*} + K_{u*}^*K_{uu}^{-1}\Sigma_uK_{uu}^{-1}K_{u*}^*) \label{eqn:predict}.
\end{align}
Moreover, high quality posterior samples can be efficiently generated (for example, when the number of inputs in $x_*$ is very large) by updating a random projection approximation of the prior (for example, using random Fourier features) using samples of the inducing variables \citep{wilson2020}.

We note that the lower bound property of this training objective makes it meaningful: increases in $\Lcal$ involve either increasing the marginal likelihood with respect to the hyperparameter, or reducing the KL divergence from the approximate posterior to the true posterior. This KL divergence is between the approximate and true posterior processes, giving reassurance on the quality of posterior predictive distributions \citep{matthews2016}. Moreover, the inducing values $u$ act as a meaningful summary of the training data which can be used in downstream tasks, for example in order to make fast predictions \citep{wilson2020}.

\section{Related Work} \label{sec:related}
There are two other main, broadly applicable, approaches to reducing the cost of learning, which are complementary: 
\begin{enumerate}
    \item using iterative methods based on fast matrix-vector multiplications (MVMs) to approximate the linear solve and log determinant, and 
    \item directly forming a low-rank approximation to the kernel matrix $K_{\mathfrak{ff}}$.
\end{enumerate}
In the former case, the cost is reduced to $O(N^2)$ in exchange for modest error, since only a limited number of steps of the iterative methods are needed to get close to convergence in practice. This is particularly advantageous when performing operations on GPU \citep{gardner2018,pleiss2018}, and when $A$ has some special structure that permits further reductions -- due either to structure in the data or in $k$ \citep{saatci2011,cunningham2008}. Direct approximations of $K_{\mathfrak{ff}}$ include by projections onto the Fourier basis for stationary kernels (Random Fourier Features, RFF \citep{rahimi2007} and variants \citep{lazaro2010,gal15}, interpolating from regular grid points (stochastic kernel interpolation \citep[SKI]{wilson2015,gardner2018b}), or projecting onto the highest variance harmonics on compact sets \citep{solin2020}. Notably, SKI makes use of fast MVMs with structured matrices to obtain costs which are linear in $N$ for low $D$. In contrast to the variational approach, these methods tend to approximate the posterior indirectly and the approximations may be qualitatively different to the exact posterior (see, for example, \cite{hensman17}). Variational methods can also be viewed as making the low rank approximation $K_{u\mathfrak{f}}^*K_{uu}^{-1} K_{u\mathfrak{f}}$ to the kernel matrix, but note that the training objective differs from simply plugging in this approximation to the marginal likelihood, as it has an additional trace term (\cref{eqn:vfe}). Recently, authors have attempted to incorporate nearest neighbour approximations into a variational framework \citep{tran2021,wuluhuan2022}.

One notable approach which does not fit into these categories is using Kalman filtering: Gaussian process regression can be viewed as solving a linear stochastic differential equation, which has $O(N)$ cost given the linear transition parameters \citep{sarkka2013}. In practice, if the data does not have additional structure such as regularly spaced inputs, computing these transition parameters will dominate the cost.

Finally, by careful design of the prior, we can create classes of covariance function for which inference and learning are computationally cheaper \citep{cohen2022,Jorgensen2022}. However, these are not broadly applicable in the sense that the classes of covariance function (and hence the prior assumptions) are limited, and only suitable to certain applications.

\paragraph{Fourier features} If we restrict the prior to be stationary, that is $k(x, x') = k(x-x') = k(\tau)$, then $k$ has a unique spectral measure. We assume throughout that the spectral measure has a proper density $s(\xi) = \int k(\tau) e^{-i 2\pi\tau^\top\xi} \,d\tau$. Note that according to the convention we use here, $\int_{\mathbb{R}^D}s(\xi)\, d\xi = k(0)/(2\pi)^D$. A first attempt at hyperparameter-independent features is Fourier features, appropriately normalised by the spectral density: $\langle \phi_{1,\xi}, f\rangle =\int f(x) e^{-i 2\pi x^\xi}/s(\xi)\, dx$. These are independent with unbounded variance \citep[Chapter~3]{lazarogredilla09,lifshits2012}, which can be shown as follows.
\begin{align}
    c_1(x', \xi) &= \int k(x',x) e^{-2\pi\xi^\top x} / s(\xi)\,dx = e^{-i2\pi\xi^\top x'} \\
    \bar{k}_1(\xi, \xi') &= \int c(x', \xi) e^{-2\pi\xi'^\top x'}/s(\xi')\,dx' = \int e^{-2\pi x'^\top(\xi-\xi')}/s(\xi')\, dx' = \delta(\xi-\xi')/s(\xi) \label{eqn:ff_kbar} 
\end{align}
Here, $\delta$ is the Dirac delta. These features are unsuitable for constructing the conditional prior $p(f|u)$ -- informally, the prior feature precision $K_{uu}^{-1}$ vanishes but $K_{u\mathfrak{f}}^*$ is finite, so the conditional prior mean $K_{\mathfrak{f}u}^*K_{uu}^{-1} u$ is zero (\cref{fig:prior,fig:cond}). 

Yet the general form is promising since (i) $K_{\mathfrak{f} u}^* K_{\mathfrak{f} u}$ depends only on the chosen features and the training inputs $x$, so indeed if we can fix the frequencies to good values, then we can precompute this term outside of the optimisation loop, and reduce the cost of computing $A'$ to $O(M^3)$ per step (\cref{sec:comp}), and also (ii) the features are independent, so $K_{uu}$ would be diagonal.

Modifications to Fourier features include applying a Gaussian window \citep{lazarogredilla09} which gives finite variance but highly co-dependent features, and Variational Orthogonal Features, where $\langle \phi_m, f\rangle  = \int\int e^{i 2\pi \xi^\top x} \psi_m(\xi) / \sqrt{s(\xi)} \,d\xi f(x) \, dx$ 
for pairwise orthogonal $\psi_m$. This approach yields independent features, so $K_{uu}$ is diagonal, but it is challenging to find suitable sets of orthogonal functions. In both of these cases, the cross-covariance $c_m$ still depends upon the hyperparameters, and so there is usually little to no computational advantage \citep{burt2020}. 

Variational Fourier Features (VFF) \citep{hensman17} set $\langle \phi_m, f\rangle $ to a reproducing kernel Hilbert space (RKHS) inner product between the harmonics on $[a, b]$, $a < b \in \mathbb{R}$ and $f$ in 1D. Due to limiting the domain to a compact subset, Fourier transforms become discrete -- that is, they become generalised Fourier series. Consequently, conditioning on only a finite subset of the frequencies works, and this gives diagonal + low-rank structure in $K_{uu}$ for lower order one-dimensional Matérn kernels. However, the covariance functions are defined on $\mathbb{R} \times  \mathbb{R}$ rather than $[a, b] \times  [a, b]$, so it is not straightforward to evaluate their spectra. Replacing the Euclidean inner product with an RKHS inner product permits to do this for lower order Matérn kernels, but this is not easily extended to other covariance functions, such as the spectral mixture kernel, or products of kernels. Moreover, in higher dimensions it is necessary to use a tensor product of 1D Matérn kernels, which is limiting. \cite{hensman17} and \cite{dutordoir2020} note that using a regular grid of frequencies as in VFF significantly increases cost for $D > 1$ unecessarily, since features which are high frequency in every dimension are usually very unimportant. However, we demonstrate that it is possible to filter out these features (\cref{sec:iff}). 

This approach could be generalised by replacing the Fourier basis with some other basis. Then $c_m$ is calculated using RKHS inner products with other basis functions, always yielding a hyperparameter independent $c_m$, with sparse matrices if the basis functions have compact support with little overlap. However, the need to calculate the RKHS norm of the basis functions for the elements of $K_{uu}$ limits these methods to kernels whose RKHS has a convenient explicit characterisation. In practice, this means using tensor products of 1D Matérn kernels. The recent work of \cite{cunningham2023} is a specific example of this which uses B-splines.

Dutordoir et al (\citeyear{dutordoir2020}) used spherical harmonic features for zonal kernels on the sphere, and this can be applied to $\mathbb{R}^D$ by mapping the data onto a sphere. In this case the inducing features are well defined and independent, and this can be generalised to other compact homogeneous Riemannian manifolds. However, the harmonic expansion of $k$ on the domain must be known; for \textit{isotropic} kernels on $\mathbb{R}^D$ restricted to the manifold, these can be computed from the spectral density \citep{solin2020}. Yet isotropy is too limiting an assumption; one can effectively incorporate different lengthscales in each dimension by learning the mapping onto the 
sphere, but $\tilde{K}_{u\mathfrak{f}}$ also depends on this mapping, and so the cost returns to $O(NM^2)$.

We seek a method which can be used with a broader class of covariance functions, but retains the key computational benefits.

\begin{figure*}
    \vskip 0.2in
    \begin{center}
        \begin{subfigure}[b]{\textwidth}
            \centering
            \includegraphics[width=0.45\textwidth]{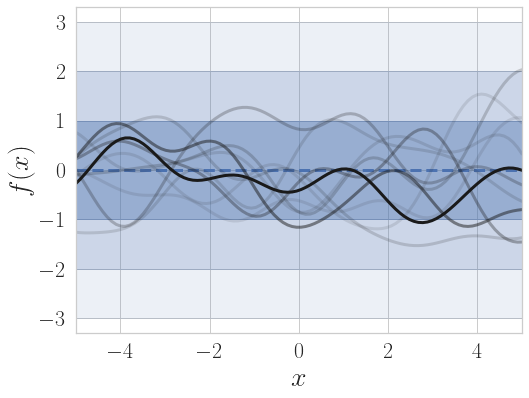} 
            \hfill
            \includegraphics[width=0.45\textwidth]{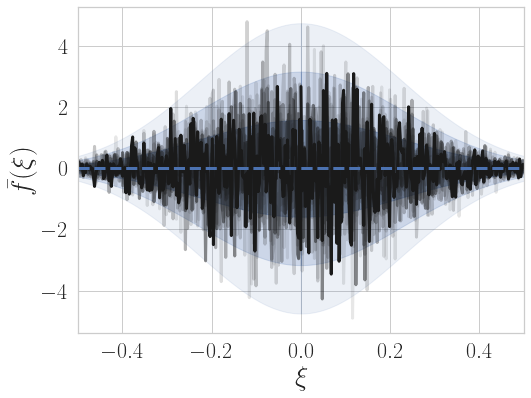}
            \caption{Prior}
            \label{fig:prior}
        \end{subfigure} \\
        \begin{subfigure}[b]{\textwidth}
            \centering
            \includegraphics[width=0.45\textwidth]{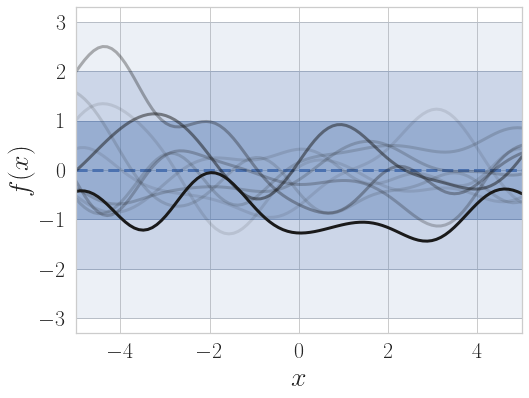} 
            \hfill
            \includegraphics[width=0.45\textwidth]{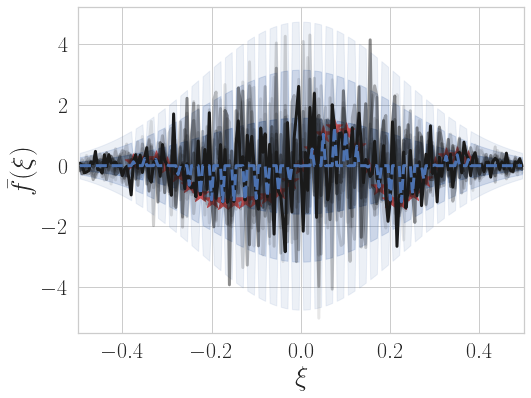} 
            \caption{Conditioning on Fourier features}
            \label{fig:cond}
        \end{subfigure} \\
        \begin{subfigure}[b]{\textwidth}
            \centering
            \includegraphics[width=0.45\textwidth]{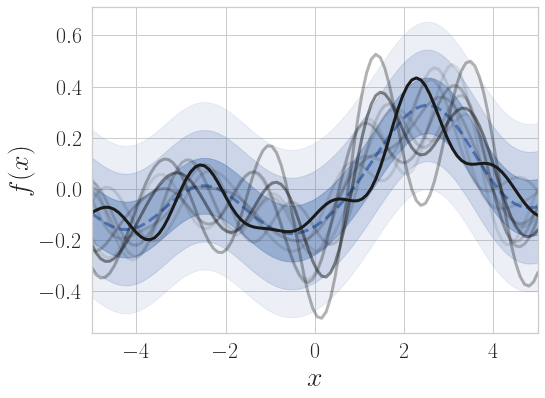}
            \hfill
            \includegraphics[width=0.45\textwidth]{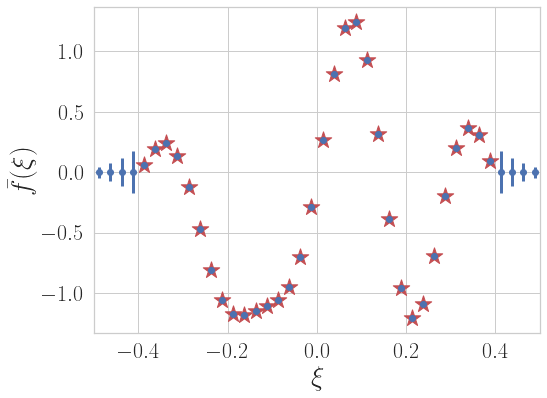} 
            \caption{Conditioning on Integrated Fourier Features}
            \label{fig:iff}
        \end{subfigure}
        
    \caption{Illustration of the Integrated Fourier Feature construction. We plot the mean function (dashed), between one and three standard deviations (shaded) and sample functions in both the data and frequency domains for a squared exponential kernel with unit lengthscale. The sample functions in the data and frequency domains correspond to one another. (a) The prior's Fourier transform is white Gaussian noise whose variance is given by the spectral density. (b) We cannot condition meaningfully on come finite collection of frequencies (red stars), as this gives no information about the other frequencies -- the conditional prior $p(f|u)$ in the data domain is unchanged. (c) We show only the inducing values in the frequency domain, which are averages of the surrounding region. The conditional prior is now meaningful, and the residual uncertainty is due to high frequency content not included in the features.} 
    \label{fig:samples}
    \end{center}
    \vskip -0.2in
    \end{figure*}

\paragraph{Choosing $z$} It is well known that optimising inducing inputs is usually not worth the extra computational cost compared to a good initialisation. We briefly review the initialisation methods for different features described above. 

For inducing points, \cite{burt2020b} show that sampling from a $k$-DPP (determinental point process; with $k=M$, and the kernel used in the DPP is the same as the GP prior's) performs well, both in theory and in practice. Since that initialisation is hyperparameter-dependent, they alternate between optimising the hyperparameters and sampling $z$ in a variational expectation maximisation (EM) approach.

For VFF as described by \cite{hensman17}, a rectangular grid of regularly spaced frequencies must be used, which they select (optimally in 1D) to be centred around the origin. In higher dimensions, the regular grid leads to including suboptimal frequencies in the corners of the grid. A construction which leads to a more feature efficient set of frequencies is the following. Create a rectangular grid, and then discard the features which are not within a given ellipsoid, where the ellipsoid's axes should be chosen to the proportional to the bandwidth of the spectral density in that dimension (which is inversely proportional to the lengthscale). This corresponds to discarding the corresponding rows in $K_{u\mathfrak{f}}$, and the corresponding row and column in $K_{uu}$.

For spherical harmonics, the optimal choice is to use the frequencies which have the highest variance. For many covariance functions (for example, those constructed from monotonically decreasing stationary kernels on $\mathbb{R}^D$ using the method of \cite{solin2020}) this corresponds to choosing the first $M$ frequencies.

For B-spline features, a grid of regularly spaced basis functions which covers a rectangular domain containing the data is used, and the sparsity of the matrices is used to make the method efficient -- features which are not strongly correlated with the data also contribute less to the computational cost.

\section{Integrated Fourier Features} \label{sec:iff}
    
    We are not able, in general, to integrate out the Dirac delta in \cref{eqn:ff_kbar} and retain the desirable computational properties without introducing further approximations. We propose to average Fourier features over \emph{disjoint} intervals of width $\varepsilon$ (\cref{fig:iff}), and approximate the spectral density as constant over the integration width. We focus on $D=1$ to lighten the presentation here; to enforce that the intervals are disjoint we require $|z_m - z_{m'}| \geq \varepsilon$ for any $m \neq m'$. 
    \begin{align}
        \langle \phi_{2,m}, f\rangle  &\stackrel{\text{def}}{=}  \varepsilon^{-1} \int_{z_m-\varepsilon/2}^{z_m+\varepsilon/2} \int f(x) e^{-i2\pi\xi x} \, dx/s(\xi) \, d\xi = \varepsilon^{-1} \int_{z_m-\varepsilon/2}^{z_m+\varepsilon/2} \langle  \phi_{1, \xi}, f\rangle  \, d\xi \nonumber \\
        c_2(x', z_m) &= \varepsilon^{-1} \int_{z_m-\varepsilon/2}^{z_m+\varepsilon/2} c_1(x', \xi)\, d\xi \approx e^{-i2\pi z_m x'}\label{eqn:c} \\
        \bar{k}_2(z_m, z_{m'}) &= \varepsilon^{-2} \int_{z_m - \varepsilon/2}^{z_m+\varepsilon/2}\int_{z_{m'}-\varepsilon/2}^{z_{m'}+\varepsilon/2} \bar{k}_1(\xi, \xi')\, d\xi' \, d\xi \approx \varepsilon^{-1} \delta_{m-m'} / s(z_m)
    \end{align}
    Now, $\delta$ is the Kronecker delta. Note that if the intervals were not chosen to be disjoint then only the last line would change. The advantage of choosing disjoint intervals is to make $K_{uu}$ diagonal, which will simplify later analysis, as well as modestly reducing the computational cost. But recall that the inversion and log determinant  of $B$ is still required, and this matrix remains dense. 

    \subsection{Convergence} \label{sec:iff_convergence}
    Now, if we calculate covariance matrices using the numerical approximation detailed above, and use this to evaluate the collapsed variational objective of \cref{eqn:vfe}, we no longer have a proper variational objective in the sense of \cref{eqn:vfe}. In order to distinguish the proper objective from our approximation, we introduce the notation $\mathfrak{F}$ for the approximate objective. 
    
    In this section we show that the approximation $\mathfrak{F}$ converges to $\Lcal$ at a reasonable rate as $M \to  \infty $, for a broad class of covariance functions, showing that these features are efficient and produce good approximations for the purposes of hyperparameter optimisation. Since we no longer have the interpretation of reducing the KL between posterior processes, we provide additional results to give reassurance about the quality of the approximate posterior predictive distribution.
    
    Firstly, we subtly transform the features to simplify the analysis. Note that applying an invertible linear transformation $T$ to the features has no impact on inference or learning. That is, if we transform $u'$ with mean and covariance $\mu_{u'}, \Sigma_{u'}$ to $u=Tu'$, then $K_{u\mathfrak{f}} = \mathbb{E}[uf(x)^\top] = \mathbb{E}[Tu'f(x)^\top] = TK_{u'\mathfrak{f}}$, and similarly $K_{uu} = TK_{u'u'}T^*$. Then from \cref{eqn:qu} it follows that if we optimise after transforming, the optimal $\Sigma_{u}^{-1} = T^{-*} \Sigma_{u'}^{-1} T^{-1}$ and the optimal $\mu_u = T\mu_{u'}$, as would be expected from optimising before transforming. Furthermore the collapsed objective of \cref{eqn:vfe} and posterior predictive mean and covariance of \cref{eqn:predict} are left unchanged. 

    For the analysis, instead of normalising by the spectral density, we normalise by its square root. This has the advantage that we do not need any approximation for $\bar{k}$, only for $c$. This simplifies later calculations.
    \begin{align*}
        \langle \phi_{3,m}, f\rangle  &\stackrel{\text{def}}{=}  \varepsilon^{-1} \int_{z_m - \varepsilon/2}^{z_m+\varepsilon/2}  \int f(x) e^{-i2\pi\xi x} \, dx / \sqrt{s(\xi)} \, d\xi = \varepsilon^{-1} \int_{z_m - \varepsilon/2}^{z_m+\varepsilon/2}  \langle \phi_{1,\xi}, f\rangle   \sqrt{s(\xi)} \, d\xi \\
        c_m(x') = c_3(x', z_m) &= \varepsilon^{-1} \int_{z_m-\varepsilon/2}^{z_m+\varepsilon/2} c_1(x', \xi) \sqrt{s(\xi)}\, d\xi \approx \sqrt{s(z_m)}e^{-i2\pi z_m x'} = \hat{c}_m(x') \\
        \bar{k}_{m,m'} = \bar{k}_3(z_m, z_{m'}) &= \varepsilon^{-2} \int_{z_m - \varepsilon/2}^{z_m+\varepsilon/2}\int_{z_{m'}-\varepsilon/2}^{z_{m'}+\varepsilon/2} \bar{k}_1(\xi, \xi')s(\xi)\, d\xi' \, d\xi = \varepsilon^{-1} \delta_{m-m'}
    \end{align*}
    We proceed without the subscript $3$ hereafter for brevity. The new approximate features are a straightforward invertible linear transformation of the previous features (in particular, $\hat{u}_m = \hat{u}_{3, m} = \sqrt{s(z_m)} \hat{u}_{2,m}$). We analyse how well $\hat{u}$ approximates $u$, but in practice we use a real-valued version of the equivalent approximate features $\hat{u}_2$ in order to have $\hat{c}$ independent of the hyperparameters (\cref{sec:real}). We defer the details of the proofs, particularly for higher dimensional inputs, to \cref{sec:proofs}.
    
    For convergence of the objectives, we use the following result of \cite{burt2019}.
    \begin{align}
        \mathbb{E}_y[D_{KL}(q(f)\mid\mid p(f|y))] &= \mathbb{E}_y[\Lcal - \F] \leq \frac{t}{\sigma^2}  \label{eqn:burtlimit}\\
        t &= \tr{(K_{\mathfrak{ff}} - \underbrace{K_{u\mathfrak{f}}^*K_{uu}^{-1} K_{u\mathfrak{f}}}_{Q_{\mathfrak{ff}}}) } \label{eqn:burttrace}
    \end{align}
    where $y$ is distributed according to \cref{eqn:gen}. The first equality follows for any well-defined inducing features \citep{matthews2016}--that is, those where the inducing features can be a priori described as a linear functional of the prior process. We now detail the additional assumptions. 
    
    \paragraph{A1} Let $\tilde{s} = s/(v\sigma^2)$ be the normalised spectral density, which is assumed to exist, with $v = k(x, x)/\sigma^2$. We assume that the normalised spectral density has a tail bound
    \begin{equation}
        \int_\rho^\infty  \tilde{s}(\xi) \, d\xi \leq \beta \rho^{-q}
    \end{equation}
    for any $\rho>0$ and some $\beta, q > 0$.
    
    \paragraph{A2} The second derivative of $s$ is bounded.
    
    \paragraph{A3} The first derivative has a relative bound
    \begin{equation}
        \left|\frac{ds(\xi)}{d\xi}\right| \leq 2L s(\xi) \implies  \left|\frac{d \sqrt{s(\xi)}}{d\xi}\right| \leq L \sqrt{s(\xi)}
    \end{equation}
    for some $L > 0$, where the implication follows wherever $s(\xi) > 0$. For example, this is satisfied if $s$ and its first derivative are bounded everywhere, which is the case for widely used covariance functions. 
    
    \paragraph{A4} The frequencies are chosen according to the regular grid $z_m = (m-1/2)\varepsilon - M/2$ for $M$ even. This requirement can be relaxed in practice. The key requirement is that $\bigcup_m [z_m-\varepsilon/2, z_m+\varepsilon/2] \to \mathbb{R}$; $\varepsilon$ could also be varied as a function of $m$, with the convergence rate dominated by the largest.

    The higher dimensional generalisation of these are the standard assumptions. We use the following simple result, proved in the supplement.
    \begin{lemma} \label{lemma:cov}
        Under assumptiona A3 and A4,
        \begin{equation*}
            c_m(x) = \hat{c}_m(x) (1 + O(\varepsilon))
        \end{equation*}
    \end{lemma}

    \begin{theorem} \label{thm:convergence}
        For $y$ sampled according to \cref{eqn:gen}, under assumptions A1 to A4, for any $\Delta, \delta > 0$ there exists $M_0, \alpha_0 > 0$ such that 
        \begin{equation*}
            \mathbb{P}\left[\frac{\Lcal - \mathfrak{F}}{N} \geq \frac{\Delta}{N}\right] \leq \delta 
        \end{equation*}
        for all $M \geq M_0$, and with
        \begin{equation*}
            M \leq \left(\frac{\alpha_0}{\Delta\delta}N\right)^{\frac{q+3}{2q}}
        \end{equation*}
        Moreover, there exists $M_1, \alpha_1 > 0$ such that for all $M > M_1$,
        \begin{equation*}
            \mathbb{P}\left[\frac{\Lcal - \F}{N} \geq \frac{\Delta}{N}\right] \leq \delta 
        \end{equation*}
        for all $M \geq M_1$, and with
        \begin{equation*}
            M \leq \left(\frac{\alpha_1}{\Delta\delta}N\right)^{\frac{q+3}{2q}}
        \end{equation*}
    \end{theorem}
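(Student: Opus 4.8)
\emph{Proof strategy.} The plan is to reduce both claims to the deficit bound of Burt et al.\ (\cref{eqn:burtlimit}--\cref{eqn:burttrace}) together with Markov's inequality. For the genuine bound $\F$, the $\sqrt{s}$-normalisation makes $K_{uu}=\varepsilon^{-1}I$ exactly, so the Nystr\"{o}m matrix is $Q_{\mathfrak{ff}}=\varepsilon K_{u\mathfrak f}^{*}K_{u\mathfrak f}$ and the deficit of \cref{eqn:burttrace} is $t = N k(0) - \varepsilon\sum_{n,m}|c_m(x_n)|^{2}$. For the approximate objective $\mathfrak F$, the key observation is that substituting $\hat c_m$ for $c_m$ while keeping $K_{uu}=\varepsilon^{-1}I$ yields \emph{exactly} the collapsed objective (\cref{eqn:vfe}), with zero KL gap, of the Gaussian process whose spectral measure is the discretisation $\tilde s(\cdot)=\varepsilon^{D}\sum_m s(z_m)\,\delta(\cdot-z_m)$: the $M$ features span that kernel's finite-dimensional RKHS, its trace term vanishes, and so $\mathfrak F = \log\mathcal N(y\mid 0,\tilde K_{\mathfrak{ff}}+\sigma^{2}I)$ with $\tilde k(\tau)=\varepsilon^{D}\sum_m s(z_m)e^{i2\pi z_m^{\top}\tau}$. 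Thus $\Lcal-\mathfrak F$ is precisely the log-marginal-likelihood perturbation of swapping $k$ for the nearby kernel $\tilde k$, which side-steps the fact that $\mathfrak F$ is not itself a variational lower bound.

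Both quantities are then governed by two error sources. \emph{(i) Spectral tail:} the contribution to $t$ (resp.\ to $K_{\mathfrak{ff}}-\tilde K_{\mathfrak{ff}}$) from frequencies outside the region $\mathcal R$ covered by the integration cells is controlled by $\int_{\mathcal R^{c}}s\le v\sigma^{2}\beta\,\rho^{-q}$ with $\rho\asymp M^{1/D}\varepsilon$ the inradius of $\mathcal R$ (cell centres placed on a grid inside a box scaled to the spectral bandwidth, as in \cref{sec:related}); crucially this contribution is positive semidefinite, so its operator and trace norms both equal its trace, $\lesssim N v\sigma^{2}\beta (M^{1/D}\varepsilon)^{-q}$. \emph{(ii) Quadrature:} the error of replacing $s$ by its midpoint/piecewise-constant approximation over cells of width $\varepsilon$ is bounded using the assumed bound on the second derivative and $|s'|\le 2L\sqrt s$; combined with \cref{lemma:cov} (whose second part shows $|c_m|$ is preserved up to relative error $O(\varepsilon^{2D})$) this adds $O(N\varepsilon^{2D})$ and $O(NM\varepsilon^{D+2})$ terms. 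For the likelihood-perturbation piece I would not bound the quadratic form by operator norm times $\|y\|^{2}$ (which loses a factor $N$), but instead use $\mathbb E[yy^{\top}]=K_{\mathfrak{ff}}+\sigma^{2}I$ to rewrite the quadratic-form and log-determinant differences as traces such as $\tr[(\tilde K_{\mathfrak{ff}}+\sigma^{2}I)^{-1}(K_{\mathfrak{ff}}-\tilde K_{\mathfrak{ff}})]$, bounded by $\sigma^{-2}$ times the trace norm of $K_{\mathfrak{ff}}-\tilde K_{\mathfrak{ff}}$, which (i)--(ii) control.

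Putting these together gives $\mathbb E_y[\Lcal-\F]$ and $\mathbb E_y|\Lcal-\mathfrak F|$ both $\lesssim \sigma^{-2}N\big[(M^{1/D}\varepsilon)^{-q}+M\varepsilon^{D+2}+\varepsilon^{2D}\big]$. I would then pick $\varepsilon$ as a function of $M$ to balance the decreasing tail term against the increasing quadrature term; for $D=1$ (the case the main text specialises to, and the case of the stated exponent) this is $\varepsilon\asymp M^{-(q+1)/(q+3)}$, which renders the $\varepsilon^{2D}$ term lower order and collapses the bracket to $\asymp M^{-2q/(q+3)}$. Markov's inequality then gives $\mathbb P[\Lcal-\mathfrak F\ge\Delta]\le\mathbb E_y|\Lcal-\mathfrak F|/\Delta$ and likewise for $\F$; requiring the right-hand side to be $\le\delta$ forces $N M^{-2q/(q+3)}\lesssim\Delta\delta$, so it suffices to take $M$ no larger than a constant multiple of $(N/(\Delta\delta))^{(q+3)/(2q)}$ while still at least a fixed $M_0$ independent of $N$ (needed only so that the chosen $\varepsilon$ is small enough for the quadrature expansions and for $\|K_{\mathfrak{ff}}-\tilde K_{\mathfrak{ff}}\|<\sigma^{2}$ to hold); $M_0$ is of order $(N/(\Delta\delta))^{(q+3)/(2q)}$, which is the content of the upper bound on $M$ in the statement, and absorbing constants into $\alpha_0,\alpha_1$ yields the theorem.

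I expect the main obstacle to be the bound on $\Lcal-\mathfrak F$. Since $\mathfrak F$ is not a variational lower bound -- the approximate features are not self-consistent, their cross-covariances being those of the divergent plain Fourier features paired with the finite Gram matrix $\varepsilon^{-1}I$ -- neither \cref{eqn:vfe_generic} nor monotonicity applies, and a naive perturbation bound on the log-likelihood term loses powers of $N$, which would force $\varepsilon$ so small that the tail term $(M^{1/D}\varepsilon)^{-q}$ explodes. Making the reinterpretation $\mathfrak F=\log\mathcal N(y\mid 0,\tilde K_{\mathfrak{ff}}+\sigma^{2}I)$ precise, and then using the $\mathbb E[yy^{\top}]$-trace identity and the positive-semidefiniteness of the tail part of $K_{\mathfrak{ff}}-\tilde K_{\mathfrak{ff}}$ to keep the $N$-dependence linear, is the crux; the remaining work is the higher-dimensional forms of the assumptions and the probabilistic bookkeeping (a single $\delta$ shared between the Markov step and, if needed, a high-probability bound on $\|y\|^{2}$).
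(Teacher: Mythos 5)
Your proposal follows essentially the same skeleton as the paper's proof: reduce both claims to a trace-type deficit, split it into a spectral-tail term and a midpoint-quadrature term, balance $\varepsilon \asymp M^{-(q+1)/(q+3)}$ (in $D=1$) to obtain an $O(M^{-2q/(q+3)})$ expected gap, and finish with Markov's inequality. For the non-variational objective $\mathfrak{F}$, both you and the paper hinge on the same structural fact — $\hat{Q}_{\mathfrak{ff}}$ is the Gram matrix of the kernel with discretised spectrum $\varepsilon^{D}\sum_m s(z_m)\delta(\cdot-z_m)$, hence PSD by Bochner — and both control the quadratic form through $\mathbb{E}[yy^\top]=K_{\mathfrak{ff}}+\sigma^2 I$ together with a log-determinant comparison; the paper packages this as an adaptation of Lemma 2 of \cite{burt2019}, giving $\mathbb{E}_y[(\Lcal-\mathfrak{F})/N\sigma^2]\le \hat{t}/N\sigma^2 + O(M\varepsilon^{3D})$, which is the same computation as your kernel-perturbation viewpoint.

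One step of your reduction is wrong as written: $\mathfrak{F}$ is \emph{not} exactly $\log\mathcal{N}(y\mid 0,\tilde{K}_{\mathfrak{ff}}+\sigma^2 I)$, because the trace term in \cref{eqn:vfe} is evaluated with the true prior variances $k(x_n,x_n)$, not $\tilde{k}(0)$; so in fact $\mathfrak{F}=\log\mathcal{N}(y\mid 0,\tilde{K}_{\mathfrak{ff}}+\sigma^2 I)-\hat{t}/(2\sigma^2)$ with $\hat{t}=\tr(K_{\mathfrak{ff}}-\hat{Q}_{\mathfrak{ff}})=N\bigl(k(0)-\tilde{k}(0)\bigr)\neq 0$, and the claim that the trace term vanishes (zero KL gap) does not hold. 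This is a correctable slip rather than a fatal one: the omitted term is precisely of the tail-plus-quadrature order your items (i)--(ii) already control, so carrying it along leaves the rate and the final bound on $M$ unchanged. Two smaller points of looseness: your $D$-dimensional quadrature bookkeeping ($M\varepsilon^{D+2}$, versus the paper's $O(M\varepsilon^{3D})$ convention, which coincide only at $D=1$) would need to be reconciled with the stated exponent for general $D$, and the remark that $M_0$ is itself of order $(N/(\Delta\delta))^{(q+3)/(2q)}$ conflates the threshold above which the asymptotic expansions hold with the sufficient number of features; neither affects the one-dimensional argument or the stated exponent.
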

    \begin{proof} We sketch the 1D case here. Let $\hat{t} = \tr(K_{\mathfrak{ff}} - \hat{K}_{u\mathfrak{f}}^* K_{uu}^{-1} \hat{K}_{u\mathfrak{f}}) = \tr(K_{\mathfrak{ff}} - \hat{Q}_{\mathfrak{ff}})$. First we show that $\hat{t}/N\sigma^2 \in O(M^{-2q/(q+3)})$. 
        \begin{align}
            \frac{\hat{t}}{N\sigma^2} &= \frac{1}{N\sigma^2} \sum_n \left(\underbrace{k(x_n, x_n)}_{v\sigma^2} - \varepsilon\sum_{m}\hat{c}_m(x_n)\hat{c}^*_{m}(x_n) \right) \\
            &= v\left(1 - \varepsilon \sum_m \tilde{s}(z_m)\right) \\
            &= v\left(1 - \int_{-(M+1)\varepsilon/2}^{(M+1)\varepsilon/2}\tilde{s}(\xi) \, d\xi\right) + vE_1 \\
            &= 2v \int_{M \varepsilon/2}^\infty  \tilde{s}(\xi) \, d\xi + vE_1
        \end{align} 
    where $E_1 = \int_{-(M+1)\varepsilon/2}^{(M+1)\varepsilon/2}\tilde{s}(\xi) - \varepsilon \sum_m \tilde{s}(z_m) + O(M\varepsilon^3)$. The integral term in the last line is in $O((M\varepsilon)^{-q})$ by assumption A1, and $E_1 \in O(M\varepsilon^3)$ from standard bounds on the error of the midpoint approximation (using A2).
    
    We must have that $\varepsilon \to  0$ as $M \to  \infty $ to make the midpoint approximation exact, yet we must have $M\varepsilon \to  \infty $ to ensure the features cover all frequencies. By optimising the trade-off, we get the stated bound.

    To complete the proof, for $t$ we could immediately apply \cref{eqn:burtlimit}. But for $\hat{t}$, we adapt the result of \cref{eqn:burtlimit} to show $\mathbb{E}_y[|\Lcal - \mathfrak{F}|/N] \leq \hat{t}/N\sigma^2$ for sufficiently large $M$, using \cref{lemma:cov} (assumptiona A3, A4). By applying Markov's inequality, we complete the proof.\end{proof}
    \begin{remark}
        The case of subgaussian spectral densities (such as for the squared exponential covariance function) is $q \to \infty $, which yields $M \in O(\sqrt{N})$. Note that this is due to the $O(M\varepsilon^3)$ terms which arise due to approximating the spectral density as constant. Intuitively, it appears that if there were no numerical approximation, the cost would be dominated by the amount of spectrum in the tails, such that convergence for subgaussian tails would be possible with $M \in O(\log N)$ for sufficiently small $\varepsilon$, as with inducing points or eigenfunctions with the squared exponential covariance function \cite{burt2019}.
    \end{remark}
    Though this demonstrates that the features are suitable for learning, we may wish to use the same features in making predictions. We no longer can use the bound in the process KL, but we show that the posterior predictive marginals converge with comparable or better rate than the objective for any choice of variational distribution.
    \begin{theorem} \label{thm:prediction}
        For the optimised $\mu_u, \Sigma_u$ (to maximise $\mathfrak{F}$), let the posterior predictive at any test point $x_*$ using the exact features $u$ have mean and variance $\mu, \Sigma$, and with the approximate features $\hat{u}$ have mean and variance $\hat{\mu}, \hat{\Sigma}$. Then, under assumptions A3 and A4,
        \begin{align*}
           |\mu - \hat{\mu}| &\in O(M\varepsilon^{2}) \\
           |\Sigma - \hat{\Sigma}| &\in O(M^2\varepsilon^{3})
        \end{align*}
        In particular, allowing $\varepsilon \sim M^{-\frac{q+1}{(q+3)}}$ as in the proof of \cref{thm:convergence} (see \cref{sec:proofs}), we have
        \begin{align}
            |\mu-\hat{\mu}| &\in O\left(M^{-\frac{q-1}{q+3}}\right) \\
            |\Sigma-\hat{\Sigma}| &\in O\left(M{-\frac{q-3}{q+3}}\right)
        \end{align}
    \end{theorem}
    \begin{proof}
        Use the definitions in \cref{eqn:qu} and apply \cref{lemma:cov} and the triangle equality.
    \end{proof}

    \subsection{Higher dimensions}
    In higher dimensions, we modify the assumptions as follows. 

    \paragraph{A1} Assume that $\tilde{k}$'s spectral measure admits a density, and denote this by $\tilde{s}$ and assume that the density admits a tail bound
    \begin{equation}
        \int_{\rho}^\infty \dots \int_{\rho}^\infty   \tilde{s}(\xi)\,d\xi_1\dots d\xi_D \leq \beta \rho^{-qD}
    \end{equation}
    for any $\rho>0$ and some $\beta, q > 0$. 
    
    \paragraph{A2} The spectral density's second derivative is bounded.
    
    \paragraph{A3} The spectral density's first derivatives are bounded as
    \begin{equation}
        \frac{\partial s(\xi)}{\partial \xi_d} \leq 2 L s(\xi) \implies  \frac{\partial \sqrt{s(\xi)}}{\partial \xi_d} \leq L\sqrt{s(\xi)}
    \end{equation}
    for some $L > 0$ (where the second expression follows wherever $s(\xi) > 0$). For example, this would be satisfied if the spectral density and its first derivative are both bounded everywhere, which includes widely used covariance functions.

    \paragraph{A4} Let the inducing frequencies be an analogous regular grid in higher dimensions, with $M^{1/D} \in \mathbb{Z}$ even. That is, for a multi-index $m_{1:D}$, 
    \begin{equation}
        [z_{m_{1:D}}]_d = (-(M^{1/D}+1)/2 + m_d)\varepsilon \quad \text{ for } d \in \{1:D\}.
    \end{equation}

    Then the main result (\cref{thm:convergence}) and the predictive bounds in terms of $q$ in \cref{thm:prediction} are unchanged. See \cref{sec:proofs} for details.

    Assumption A4 requires a full regular grid of features, which means the number of features increases exponentially in $D$. In common with previous work \citep{hensman17,cunningham2023} this can be avoided if the covariance function can be assumed to be additive over dimensions. Otherwise, we can improve the computational cost by using a subset of the grid points where the spectral density is highest. Although this improves the cost in higher dimensions, the scaling is in general still exponential in $D$. This limits IFF's applicability to lower dimensional spatial or spatiotemporal modelling applications. 

    \subsection{Choosing the approximation parameters}
    \begin{figure}[t]
        \vskip 0.2in
        \begin{center}
        
                \includegraphics[width=0.95\textwidth]{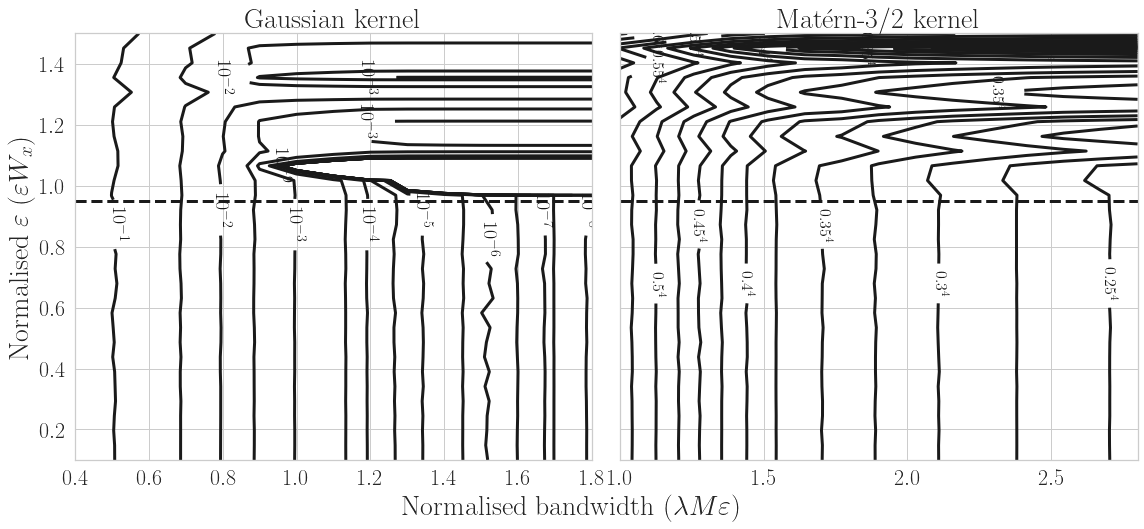}
                \caption{Gap between the log marginal likelihood and the training objective ($\Lcal-\mathfrak{F}$) for different settings of $M$, $\varepsilon$ for data sampled from a GP with a Gaussian (left) or Matérn-3/2 (left) kernel. In each case the hyperparameters are set to their groundtruth values, where the lengthscale is $\lambda $. The inputs are samples from a uniform distribution centred on 0 and with width $W_x$. The horizontal line is at $0.95$.}
                \label{fig:contour}
        
        \end{center}
        \vskip -0.2in
    \end{figure}
    \paragraph{Choosing $M$} These need to be selected whenever using SGPR. In IFF, the faster $s$ decays, the lower $M$ we should need for a good approximation to the log marginal likelihood (\cref{thm:convergence}) and \cref{fig:contour} shows that $M$ need not be very large (in each dimension) to get good performance; we need $M\varepsilon$ at around the approximate bandwidth of the covariance function, which increases as the lengthscale $\lambda $ reduces. If we know a priori how small the lengthscale might become -- for example by examining the Fourier transform of the data, from prior knowledge, or by training a model on small random subsets of the data -- then we can use this to select $M$. Note that if the lengthscale is shorter, we would expect $M$ to need to be larger for SGPR also, as we would need inducing points to be placed closer together. In practice, as with all SGPR methods, $M$ controls the trade-off between computational resources and performance, and can be set as large as needed to get satisfactory performance for the applicaiton, or as large as the available resources allow. 

    \paragraph{Choosing $z$} Given a fixed $\varepsilon, M$, the optimal choice is to choose frequencies spaced by $\varepsilon$ in the regions of highest spectral density, comparable to spherical harmonics. For monotonically decreasing spectral densities maximised at the origin, such as the Gaussian or Matérn kernels, this corresponds exactly to our refined construction for VFF in \cref{sec:related}, which involved choosing grid points which were contained within an ellipsoid whose axes were inversely proportional to the lengthscale in each dimension. To avoid dependence on the hyperparameters (which would undo the benefits of being able to precompute $A'$), we opt in practice for a spherical threshold.

    \paragraph{Choosing $\varepsilon$} When adding more features, we can either cover a higher proportion of the prior spectral density or reduce $\varepsilon$. In the full proof of \cref{thm:convergence} (\cref{sec:proofs}), as the spectral density gets heavier tailed, $\varepsilon$ approaches $O(M^{-1})$. Then $M\varepsilon$ is almost constant, which suggests this part tends to dominate. That is, once $M\varepsilon$ is sufficiently large, we should add more features by reducing $\varepsilon$ rather than increasing $M$. However, we explore the trade-off between increasing $M\varepsilon$ and decreasing $\varepsilon$ numerically (\cref{fig:contour}) by plotting the gap between the IFF bound and the log marginal likelihood, varying the bandwidth covered and the size of $\varepsilon$ relative to the inverse of the data width ($W_x \approx \max_n x_{n} - \min_n x_n$). The model and data generating process use a kernel with lengthscale $\lambda $. We see that in practice, as long as $\varepsilon^{-1}$ is below the inverse of the data width, the gap is not very sensitive to its value. Thus for the experiments, we conservatively set $\varepsilon_d = 0.95/(\max_{n,d} x_{nd} - \min_{n, d} x_{nd})$.

    This result appears to be in tension with the theory. However, we note that in the proof we are effectively interested in producing a good approximation of the function across the whole domain (we make no assumptions about the input locations). But in \cref{fig:contour}, we explicitly take into account where the data is. Intuitively, our construction involves approximating the function with regularly sampled frequencies. Then it should be possible to construct a good approximation to a function on an interval of width $W_x$ as long as the sample spacing is no more than $1/W_x$, as a Fourier dual to the classical Nyquist-Shannon sampling theorem (see, for example, \citealp[Chapter~5,][]{vetterli2012})

    \paragraph{Other covariance functions} We have so far assumed that the spectral density is available in closed form. However, we only need regularly spaced point evaluations of the spectral density, for which it suffices to evaluate the discrete Fourier transform of regularly spaced evaluations of the covariance function. This adds, at worst, $O(M^2)$ computation to each step.

    \subsection{Limitations}
    IFF can be used for faster learning for large datasets in low dimensions, which matches our target applications. Typically, it will perform poorly for $D \gtrapprox 4$, and both in this case and for low $N$, we expect SGPR to outperform all alternatives, including IFF, and our analysis and evaluation are limited to the conjugate setting. 

    IFF is limited to stationary priors; while these are the most commonly used, they are not appropriate for many spatial regression tasks of interest, and a fast method for meaningful non-stationary priors would be a beneficial extension. Amongst those stationary priors, we require that the spectral density is sufficiently regular; this is satisfied for many commonly used covariance functions, but not for periodic covariance functions, where the spectral measure is discrete. While IFF can be used with popular covariance functions for modelling quasi-periodic functions (such as a product of squared exponential and periodic covariance functions, or the spectral mixture kernel) if the data has strong periodic components, the maximium marginal likelihood parameters will cause the spectral density to collapse towards a discrete measure (for example, learning very large lengthscales). 

    Finally, we have left some small gaps between theory and practice, both in how to select the tunable parameter $\varepsilon$, and in characterising the quality of the posterior predictive distribution.

\begin{figure}
    \vskip 0.2in
    \begin{center}
        \includegraphics[width=0.9\textwidth]{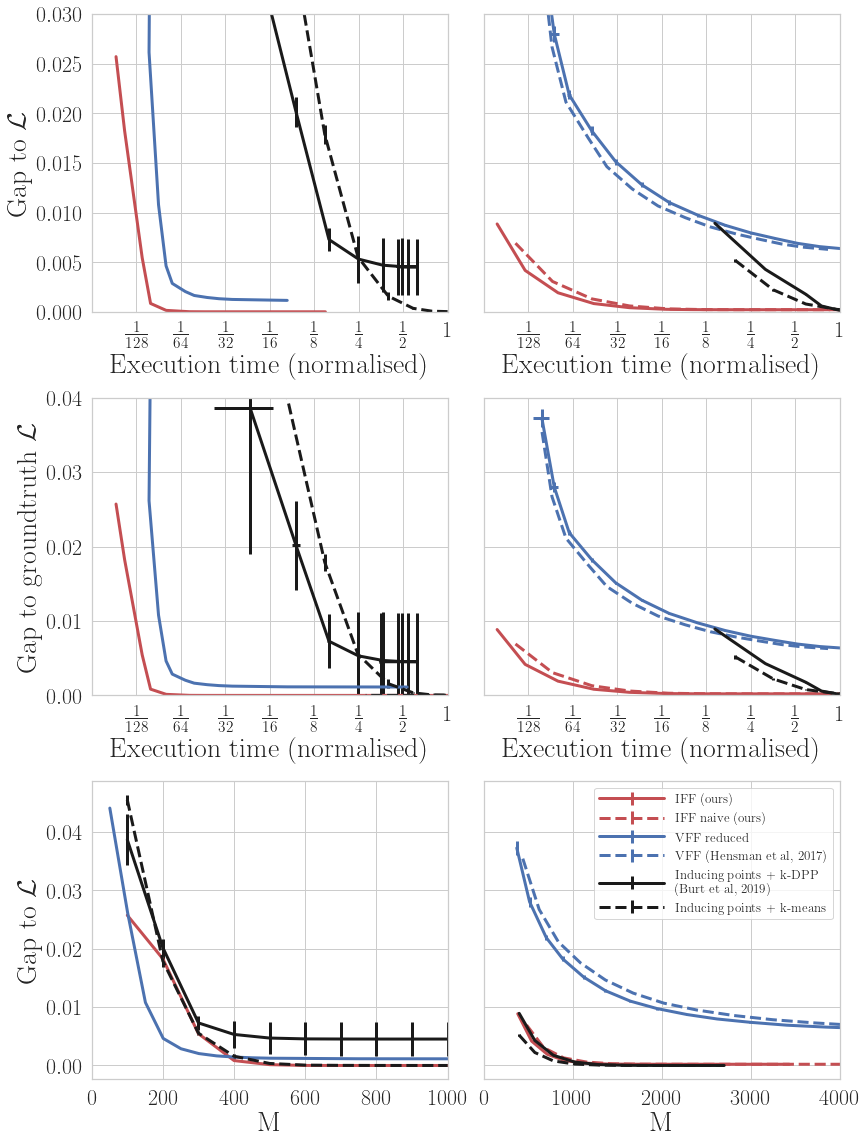}
        
        \caption{Comparing standard sparse Gaussian process regression (black) to IFF (red) for data generated from a prior with Gaussian covariance function in 1D (left) and 2D (right). Lower and the to the left is better. The groundtruth $\Lcal$ is $\Lcal$ evaluated at the groundtruth hyperparameters, whereas in the other rows, $\Lcal$ is evaluated at the learnt hyperparameters. The gaps are normalised by $N$, and execution time is normalised by the longest. The bottom row shows feature efficiency, whereas the upper rows show computational efficiency.}
        \label{fig:se_synth}
    \end{center}
    \vskip -0.2in
\end{figure}

\begin{figure}
    \vskip 0.2in
    \begin{center}
        \includegraphics[width=0.8\textwidth]{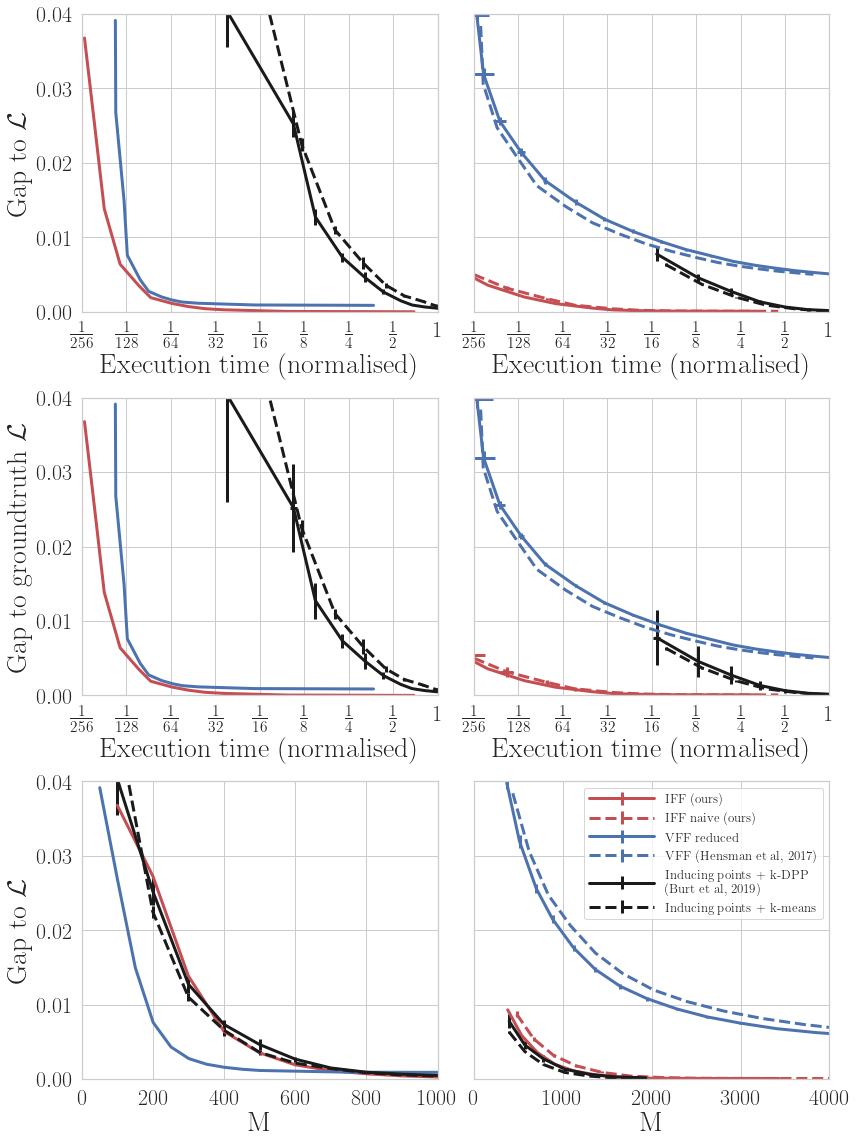}
        \caption{As \cref{fig:se_synth}, but with the data sampled from a Matérn-5/2 GP. The picture is broadly comparable, but VFF now more closely matches the prior, so the drop in feature efficiency is far less in higher dimensions.}
        \label{fig:matern_synth}
    \end{center}
    \vskip -0.2in
\end{figure}

\section{Experiments} \label{sec:experiments}
We seek to show that IFF gives a significant speedup for large datasets in low dimensions, with a particular focus on spatial modelling. Amongst other fast sparse methods, we compare against VFF and B-Spline features. For spherical harmonics, learning independent lengthscales for each dimension is incompatible with precomputation. In any case, we found that we were unable to successfully learn reasonable hyperparameters with that method in our setting, except if the number of feature was very small. For a conventional (no pre-compute) sparse baseline, we use inducing points sampled according to the scheme of \cite{burt2020}. For our synthetic experiments, we also used inducing points initialised using $k$-means and kept fixed. For the real-world spatial datasets, we also tested SKI, due to its reputation for fast performance, and its fairly robust implementation.
\subsection{Synthetic datasets}
First we consider a synthetic setting where the assumptions of \cref{thm:convergence} hold. We sample from a GP with a Gaussian covariance function, and compare the speed of variational methods in 1 and 2 dimensions. We use a small ($N=10\,000$) dataset in order that we can easily evaluate the log marginal likelihood at the learnt hyperparameters. Where possible, we use the same (squared exponential) model for learning; for VFF, we use a Matérn-5/2 kernel in 1D, and a tensor product of Matérn-5/2 covariance functions, since this is the best approximation to a Gaussian kernel which is supported. Further details are in \cref{sec:training_details}. 

Additionally, in the 2D setting, we use both the naive set of features (a regular, rectangular grid), and the refined set of features described in \cref{sec:related}.

IFF generally has slightly lower gap to the marginal likelihood at the learnt optimum for any $M$ than other fast variational methods (\cref{fig:se_synth}, bottom row), but because the $O(NM^2)$ work is done only once, it and the other fast sparse methods are much faster to run than inducing points (\cref{fig:se_synth}, top two rows). Note the logarithmic time scale on the plots: for a specified threshold on the gap to the marginal likelihood, IFF is often around 30 times faster than using inducing points. 

The experiments demonstrate the issues with the limited choice of prior with methods such as VFF. In 1D, the Matérn-5/2 kernel is a good approximation to Gaussian, so the performance is similar to IFF. But for 2D, the product model is a much worse approximation of the groundtruth model. We expect to see a similar pattern for B-spline features, but we were unable to run the method in our synthetic setting due to unresolved issues in the implementation of \cite{cunningham2023} which did not arise in the real-world experiments. When we reproduce the same experiment, but with data sampled from a Matérn-5/2 GP, the results are similar, but with a much smaller gap between VFF and the other methods in the 2D case (\cref{fig:matern_synth}).

We note that in the 1D setting, when the data is sampled from a prior with Gaussian covariance function, the $k$-DPP method gets stuck at a local optimum, which is a known risk with that method. We did not find this to be an issue in any of the other experiments.

The refined feature set for higher dimensional IFF and VFF (solid lines) is indeed slightly more feature efficient than the naive approach (dashed lines; \cref{fig:se_synth}, bottom right panel). But in practice, we find that the computational overhead to selecting better features outweighs the savings from using fewer features for VFF, thought not for IFF (\cref{fig:se_synth}, upper and middle right panels).

\subsection{Real World Datasets}

\begin{figure}
    \vskip 0.2in
    \begin{center}

    \includegraphics[width=\textwidth]{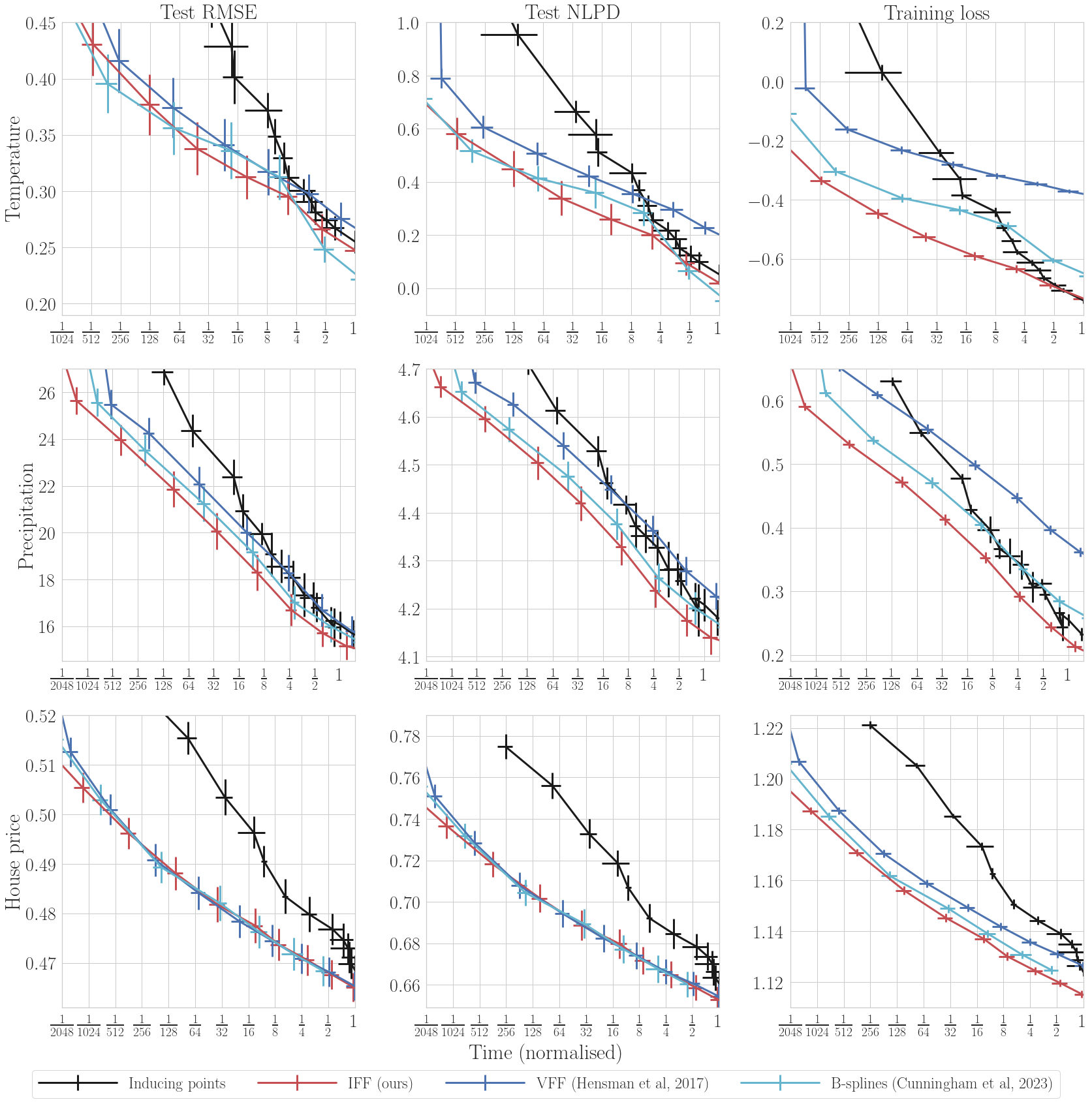}
    \end{center}
    \caption{Performance curves for real world datasets of increasing size (the top row is the smallest). Lower and to the left is better.}
    \label{fig:curves}
\end{figure}

\begin{figure}
    \vskip 0.2in
    \begin{center}

    \includegraphics[width=\textwidth]{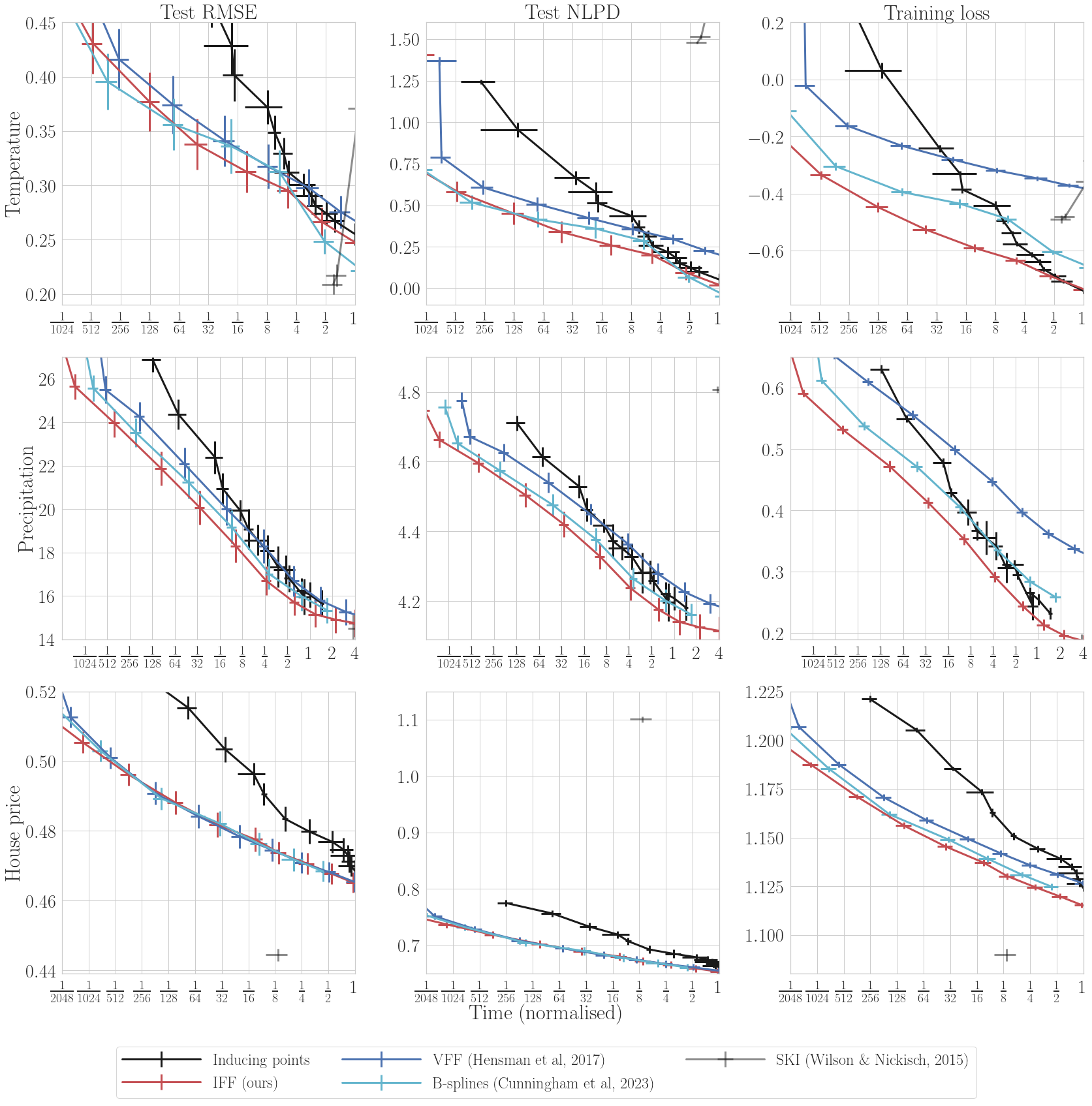}
    \end{center}
    \caption{As \cref{fig:curves}, but with SKI included. SKI exhibits very favourable and fast predictive mean performance, but its predictive variances are poor, leading to very large NLPD.}
    \label{fig:curves_ski}
\end{figure}

    We now compare training objective and test performance on three real-world spatial modelling datasets of increasing size, and of practical interest. We plot the root mean squared error (RMSE) and negative log predictive density (NLPD) on the test set along with the training objective and run time in \cref{fig:curves,fig:curves_ski} using five uniformly random 80/20 train/test splits. For inducing points, we always use the method of \cite{burt2020b}. The time plotted is normalised per split against inducing points. Further training and dataset details are in \cref{sec:training_details}. 

    We add SKI to the comparison here, since it is the most widely used alternative to variational methods in this setting. For the variational methods, both the time and performance are implicitly controlled by the number of features $M$; as $M$ is increased the performance improves and the time increases, that is we move along the curve to the right. This is a very useful property, since we can select $M$ according to our available computational budget and be fairly confident of maximising performance. With SKI, the equivalent parameter is the grid size, and similarly increasing the grid size generally improves performance. However, when the grid size is low, optimisation can take longer, so for example for the temperature dataset, we move to the left as the grid size is increased (\cref{fig:curves_ski}; top row). For the other datasets, we only plot the SKI with the grid size automatically selected by the reference implementation.

    SKI generally has very good predictive means, leading to low RMSE, and is very fast, but the predictive variances are poor, generally being too low and producing some negative values. Ignoring the negative values, the NLPD is much worse for SKI than for the variational methods, which is highly undesirable in a probabilistic method.

    We exclude SKI in \cref{fig:curves} in order to zoom in on the curves for the variational methods. We are interested in the regime where $M \ll N$; as we move to the right and $M$ is similar to $N$, inducing points will become competitive with the faster methods, since the $O(M^3)$ cost dominates. Comparing IFF to VFF, we see that always performs at least as well, and produces a substantially better performance for a given time on the temperature and precipitation datasets, due to a more flexible choice of covariance function -- in particular, note that the training objective of IFF is substantially lower than VFF on these datasets, but comparable to that of inducing points, which uses the same covariance function. 
    
    The B-spline features are also limited in choice of covariance function, but the sparse structure of the covariance matrix leads to a much better performance. Nonetheless, despite involving a dense matrix inverse, IFF has a significant advantage on the precipitation dataset (around 25-30\% faster on average), and is comparable on the houseprice dataset.

    Compared to the idealised, synthetic, setting, inducing points become very competetive in the higher resource setting towards the right hand side of each plot, particularly for the smallest dataset (temperature). But for more performance thresholds, we find that fast variational methods offer a substantial improvement, with typically more than a factor of two speedup.

\section{Conclusions}
    Integrated Fourier features offer a promising method for fast Gaussian process regression for large datasets. There are significant cost savings since the $O(N)$ part of the computation can be done outside of the loop, yet they support a broad class of stationary priors. Crucially, they are also much easier to analyse than previous work, allowing for convergence guarantees and clear insight into how to choose parameters.

    They are immediately applicable to challenging spatial regression tasks, but a significant limitation is the need to increase $M$ exponentially in $D$. Further methods to exploit structure in data for spatiotemporal modelling tasks ($D=3$ or $4$) is an important line of further work. More broadly, an interesting direction is to consider alternatives to the Fourier basis which can achieve similar results for non-stationary covariance functions, which are crucial for achieving state of the art performance in many applications. Finally, a worthwhile direction for increased practical use would be to develop suitable features to quasi-periodic priors such as those arising from the spectral mixture kernel, or a product of periodic and Gaussian kernels.

\bibliography{biblio}

\begin{thebibliography}{33}
\providecommand{\natexlab}[1]{#1}
\providecommand{\url}[1]{\texttt{#1}}
\expandafter\ifx\csname urlstyle\endcsname\relax
  \providecommand{\doi}[1]{doi: #1}\else
  \providecommand{\doi}{doi: \begingroup \urlstyle{rm}\Url}\fi

\bibitem[Bogachev(1998)]{bogachev1998}
Vladimir~I Bogachev.
\newblock \emph{Gaussian Measures}.
\newblock {American Mathematical Society}, 1998.
\newblock ISBN 978-0-8218-1054-5.

\bibitem[Burt et~al.(2019)Burt, Rasmussen, and {van der}~Wilk]{burt2019}
David Burt, Carl~Edward Rasmussen, and Mark {van der}~Wilk.
\newblock Rates of convergence for sparse variational {G}aussian process
  regression.
\newblock In \emph{36$^{th}$ International Conference on Machine Learning
  (ICML)}, 2019.

\bibitem[Burt et~al.(2020{\natexlab{a}})Burt, Rasmussen, and {van
  der}~Wilk]{burt2020}
David~R Burt, Carl~Edward Rasmussen, and Mark {van der}~Wilk.
\newblock Variational orthogonal features, 2020{\natexlab{a}}.
\newblock URL \url{https://arxiv.org/abs/2006.13170}.

\bibitem[Burt et~al.(2020{\natexlab{b}})Burt, Rasmussen, and {van der
  Wilk}]{burt2020b}
David~R Burt, Carl~Edward Rasmussen, and Mark {van der Wilk}.
\newblock Convergence of sparse variational inference in {G}aussian processes
  regression.
\newblock \emph{Journal of Machine Learning Research (JMLR)}, 21\penalty0
  (131):\penalty0 1--63, 2020{\natexlab{b}}.
\newblock URL \url{http://jmlr.org/papers/v21/19-1015.html}.

\bibitem[Cohen et~al.(2022)Cohen, Daulton, and Osborne]{cohen2022}
Michael~K Cohen, Samuel Daulton, and Michael~A Osborne.
\newblock Log-linear-time gaussian processes using binary tree kernels.
\newblock In S.~Koyejo, S.~Mohamed, A.~Agarwal, D.~Belgrave, K.~Cho, and A.~Oh
  (eds.), \emph{Advances in Neural Information Processing Systems}, volume~35,
  pp.\  8118--8129. Curran Associates, Inc., 2022.
\newblock URL
  \url{https://proceedings.neurips.cc/paper_files/paper/2022/file/359ddb9caccb4c54cc915dceeacf4892-Paper-Conference.pdf}.

\bibitem[Cunningham et~al.(2023)Cunningham, de~Souza, Takao, van~der Wilk, and
  Deisenroth]{cunningham2023}
Harry~Jake Cunningham, Daniel~Augusto de~Souza, So~Takao, Mark van~der Wilk,
  and Marc~Peter Deisenroth.
\newblock Actually sparse variational gaussian processes.
\newblock In Francisco Ruiz, Jennifer Dy, and Jan-Willem van~de Meent (eds.),
  \emph{Proceedings of The 26th International Conference on Artificial
  Intelligence and Statistics}, volume 206 of \emph{Proceedings of Machine
  Learning Research}, pp.\  10395--10408. PMLR, 25--27 Apr 2023.
\newblock URL \url{https://proceedings.mlr.press/v206/cunningham23a.html}.

\bibitem[Cunningham et~al.(2008)Cunningham, Shenoy, and Sahani]{cunningham2008}
John~P Cunningham, Krishna~V Shenoy, and Maneesh Sahani.
\newblock Fast {G}aussian process methods for point process intensity
  estimation.
\newblock In \emph{25$^{th}$ International Conference on Machine Learning
  (ICML)}, pp.\  192–199, 2008.

\bibitem[Dutordoir et~al.(2020)Dutordoir, Durrande, and Hensman]{dutordoir2020}
Vincent Dutordoir, Nicolas Durrande, and James Hensman.
\newblock Sparse {G}aussian processes with spherical harmonic features.
\newblock In \emph{37$^{\text{th}}$ International Conference on Machine
  Learning (ICML)}, 2020.

\bibitem[Gal \& Turner(2015)Gal and Turner]{gal15}
Yarin Gal and Richard Turner.
\newblock Improving the {G}aussian process sparse spectrum approximation by
  representing uncertainty in frequency inputs.
\newblock In \emph{32$^{nd}$ International Conference on Machine Learning
  (ICML)}, 2015.

\bibitem[Gardner et~al.(2018{\natexlab{a}})Gardner, Pleiss, Wu, Weinberger, and
  Wilson]{gardner2018b}
Jacob Gardner, Geoff Pleiss, Ruihan Wu, Kilian Weinberger, and Andrew~Gordon
  Wilson.
\newblock Product kernel interpolation for scalable {G}aussian processes.
\newblock In \emph{21$^{st}$ International Conference on Artificial
  Intelligence and Statistics (AISTATS)}, 2018{\natexlab{a}}.

\bibitem[Gardner et~al.(2018{\natexlab{b}})Gardner, Pleiss, Bindel, Weinberger,
  and Wilson]{gardner2018}
Jacob~R. Gardner, Geoff Pleiss, David Bindel, Kilian~Q. Weinberger, and
  Andrew~Gordon Wilson.
\newblock {GP}y{T}orch: Blackbox matrix-matrix {G}aussian process inference
  with {GPU} acceleration, 2018{\natexlab{b}}.
\newblock URL \url{https://arxiv.org/abs/1809.11165}.

\bibitem[Garnett(2023)]{garnett2023}
Roman Garnett.
\newblock \emph{Bayesian Optimization}.
\newblock Cambridge University Press, 2023.

\bibitem[Hennig et~al.(2022)Hennig, Osborne, and Kersting]{hennig2022}
Philipp Hennig, Michael~A Osborne, and Hans~P Kersting.
\newblock \emph{Probabilistic Numerics: {C}omputation as Machine Learning}.
\newblock Cambridge University Press, 2022.
\newblock \doi{10.1017/9781316681411}.

\bibitem[Hensman et~al.(2015)Hensman, Matthews, and Ghahramani]{hensman15}
James Hensman, Alexander G de~G Matthews, and Zoubin Ghahramani.
\newblock Scalable variational {G}aussian process classification.
\newblock In \emph{18$^{\text{th}}$ International Conference on Artificial
  Intelligence and Statistics (AISTATS)}, 2015.

\bibitem[Hensman et~al.(2017)Hensman, Durrande, and Solin]{hensman17}
James Hensman, Nicolas Durrande, and Arno Solin.
\newblock Variational {F}ourier features for {G}aussian processes.
\newblock \emph{Journal of Machine Learning Research}, 2017.

\bibitem[J\o~rgensen \& Osborne(2022)J\o~rgensen and Osborne]{Jorgensen2022}
Martin J\o~rgensen and Michael~A Osborne.
\newblock Bezier gaussian processes for tall and wide data.
\newblock In S.~Koyejo, S.~Mohamed, A.~Agarwal, D.~Belgrave, K.~Cho, and A.~Oh
  (eds.), \emph{Advances in Neural Information Processing Systems}, volume~35,
  pp.\  24354--24366. Curran Associates, Inc., 2022.
\newblock URL
  \url{https://proceedings.neurips.cc/paper_files/paper/2022/file/99c80ceb10cb674110f03b2def6a5b76-Paper-Conference.pdf}.

\bibitem[L\'{a}zaro-Gredilla \& Figueiras-Vidal(2009)L\'{a}zaro-Gredilla and
  Figueiras-Vidal]{lazarogredilla09}
Miguel L\'{a}zaro-Gredilla and An\'{\i}bal Figueiras-Vidal.
\newblock Inter-domain {G}aussian processes for sparse inference using inducing
  features.
\newblock In \emph{26$^{\text{th}}$ Conference on Neural Information Processing
  Systems (NeurIPS)}, 2009.

\bibitem[L{\'a}zaro-Gredilla et~al.(2010)L{\'a}zaro-Gredilla,
  Quinonero-Candela, Rasmussen, and Figueiras-Vidal]{lazaro2010}
Miguel L{\'a}zaro-Gredilla, Joaquin Quinonero-Candela, Carl~Edward Rasmussen,
  and An{\'\i}bal~R Figueiras-Vidal.
\newblock Sparse spectrum gaussian process regression.
\newblock \emph{The Journal of Machine Learning Research (JMLR)}, 11:\penalty0
  1865--1881, 2010.

\bibitem[Lifshits(2012)]{lifshits2012}
Mikhail~A Lifshits.
\newblock \emph{Lectures on {{Gaussian}} Processes}.
\newblock {Springer}, 2012.
\newblock ISBN 978-3-642-24938-9.

\bibitem[Matthews et~al.(2016)Matthews, Hensman, Turner, and
  Ghahramani]{matthews2016}
Alexander G de~G Matthews, James Hensman, Richard~E Turner, and Zoubin
  Ghahramani.
\newblock On sparse variational methods and the kullback-leibler divergence
  between stochastic processes.
\newblock In Arthur Gretton and Christian~C. Robert (eds.), \emph{Proceedings
  of the 19th International Conference on Artificial Intelligence and
  Statistics}, volume~51 of \emph{Proceedings of Machine Learning Research},
  pp.\  231--239, Cadiz, Spain, 09--11 May 2016. PMLR.
\newblock URL \url{https://proceedings.mlr.press/v51/matthews16.html}.

\bibitem[Matthews et~al.(2017)Matthews, {van der Wilk}, Nickson, Fujii,
  {Boukouvalas}, {Le{\'o}n-Villagr{\'a}}, Ghahramani, and Hensman]{gpflow}
Alexander G. de~G. Matthews, Mark {van der Wilk}, Tom Nickson, Keisuke. Fujii,
  Alexis {Boukouvalas}, Pablo {Le{\'o}n-Villagr{\'a}}, Zoubin Ghahramani, and
  James Hensman.
\newblock {{GP}flow: A {G}aussian process library using {T}ensor{F}low}.
\newblock \emph{Journal of Machine Learning Research (JMLR)}, 2017.

\bibitem[Pleiss et~al.(2018)Pleiss, Gardner, Weinberger, and
  Wilson]{pleiss2018}
Geoff Pleiss, Jacob Gardner, Kilian Weinberger, and Andrew~Gordon Wilson.
\newblock Constant-time predictive distributions for {G}aussian processes.
\newblock In \emph{35$^{th}$ International Conference on Machine Learning
  (ICML)}, 2018.

\bibitem[Rahimi \& Recht(2007)Rahimi and Recht]{rahimi2007}
Ali Rahimi and Benjamin Recht.
\newblock Random features for large-scale kernel machines.
\newblock In \emph{24$^{th}$ Neural Information Processing Systems (NeurIPS)},
  2007.

\bibitem[Rasmussen \& Williams(2006)Rasmussen and Williams]{rasmussen06}
Carl~Edward Rasmussen and Christopher K~I Williams.
\newblock \emph{Gaussian Processes for Machine Learning}.
\newblock {MIT Press}, 2006.
\newblock ISBN 978-0-262-18253-9.

\bibitem[Saat\c{c}i(2011)]{saatci2011}
Yunus Saat\c{c}i.
\newblock \emph{Scalable {{Inference}} for {{Structured Gaussian Process
  Models}}}.
\newblock PhD thesis, University of Cambridge, 2011.

\bibitem[S{\"a}rkk{\"a} et~al.(2013)S{\"a}rkk{\"a}, Solin, and
  Hartikainen]{sarkka2013}
Simo S{\"a}rkk{\"a}, Arno Solin, and Jouni Hartikainen.
\newblock Spatiotemporal learning via infinite-dimensional {B}ayesian filtering
  and smoothing: A look at {G}aussian process regression through {K}alman
  filtering.
\newblock \emph{IEEE Signal Processing Magazine}, 2013.

\bibitem[Solin \& S{\"a}rkk{\"a}(2020)Solin and S{\"a}rkk{\"a}]{solin2020}
Arno Solin and Simo S{\"a}rkk{\"a}.
\newblock Hilbert space methods for reduced-rank {G}aussian process regression.
\newblock \emph{Statistics and Computing}, 2020.

\bibitem[Titsias(2009)]{titsias2009}
Michalis Titsias.
\newblock Variational learning of inducing variables in sparse {G}aussian
  processes.
\newblock In \emph{12$^{\text{th}}$ International Conference on Artificial
  Intelligence and Statistics (AISTATS)}, 2009.

\bibitem[Tran et~al.(2021)Tran, Milios, Michiardi, and Filippone]{tran2021}
Gia-Lac Tran, Dimitrios Milios, Pietro Michiardi, and Maurizio Filippone.
\newblock Sparse within sparse {G}aussian processes using neighbor information.
\newblock In Marina Meila and Tong Zhang (eds.), \emph{Proceedings of the 38th
  International Conference on Machine Learning}, volume 139 of
  \emph{Proceedings of Machine Learning Research}, pp.\  10369--10378. PMLR,
  18--24 Jul 2021.
\newblock URL \url{https://proceedings.mlr.press/v139/tran21a.html}.

\bibitem[Vetterli et~al.(2012)Vetterli, Kova\v{c}evi\'{c}, and
  Goyal]{vetterli2012}
Martin Vetterli, Jelena Kova\v{c}evi\'{c}, and Vivek~K Goyal.
\newblock Foundations of signal processing.
\newblock 2012.

\bibitem[Wilson \& Nickisch(2015)Wilson and Nickisch]{wilson2015}
Andrew~Gordon Wilson and Hannes Nickisch.
\newblock Kernel interpolation for scalable structured {G}aussian processes
  ({KISS-GP}).
\newblock In \emph{32$^{\text{nd}}$ International Conference on Machine
  Learning (ICML)}, 2015.

\bibitem[Wilson et~al.(2020)Wilson, Borovitskiy, Terenin, Mostowsky, and
  Deisenroth]{wilson2020}
James~T Wilson, Viacheslav Borovitskiy, Alexander Terenin, Peter Mostowsky, and
  Marc~P Deisenroth.
\newblock Efficiently sampling functions from {G}aussian process posteriors.
\newblock In \emph{37$^{\text{th}}$ International Conference on Machine
  Learning (ICML)}, 2020.

\bibitem[Wu et~al.(2022)Wu, Pleiss, and Cunningham]{wuluhuan2022}
Luhuan Wu, Geoff Pleiss, and John~P Cunningham.
\newblock Variational nearest neighbor {G}aussian process.
\newblock In Kamalika Chaudhuri, Stefanie Jegelka, Le~Song, Csaba Szepesvari,
  Gang Niu, and Sivan Sabato (eds.), \emph{Proceedings of the 39th
  International Conference on Machine Learning}, volume 162 of
  \emph{Proceedings of Machine Learning Research}, pp.\  24114--24130. PMLR,
  17--23 Jul 2022.
\newblock URL \url{https://proceedings.mlr.press/v162/wu22h.html}.

\end{thebibliography}
\bibliographystyle{tmlr}

\clearpage
\appendix

\section{Computation} \label{sec:comp}
    Recall the collapsed objective.
    \begin{align}
        \F(\mu_u, \Sigma_u) &= \log \mathcal{N}(y|0, \: K_{u\mathfrak{f}}^*K_{uu}^{-1}K_{u\mathfrak{f}} + \sigma^{2}I) 
        - \dfrac{1}{2}\sigma^{-2} \tr(K_{\mathfrak{ff}} - K_{u\mathfrak{f}}^*K_{uu}^{-1}K_{u\mathfrak{f}}) \\
        &= -\frac{1}{2}\log |K_{u\mathfrak{f}}^*K_{uu}^{-1}K_{u\mathfrak{f}} + \sigma^{2}I| 
        -\frac{1}{2}y^\top(K_{u\mathfrak{f}}^*K_{uu}^{-1}K_{u\mathfrak{f}} + \sigma^{2}I)^{-1}y - \dfrac{1}{2}\sigma^{-2}  \tr(K_{\mathfrak{ff}} - K_{u\mathfrak{f}}^*K_{uu}^{-1}K_{u\mathfrak{f}})
    \end{align}
    With $\bar{y} = K_{u\mathfrak{f}}y$, $\sum_n y_n^2 = \nu^2$, we apply the Woodbury identity to the inverse in the quadratic form.
    \begin{align*}
        (K_{u\mathfrak{f}}^*K_{uu}^{-1}K_{u\mathfrak{f}} + \sigma^{2}I)^{-1} &= \sigma^{-2}I - 
        \sigma^{-4} K_{u\mathfrak{f}}^*(K_{uu}+\sigma^{-2}K_{u\mathfrak{f}}K_{u\mathfrak{f}}^*)^{-1}K_{u\mathfrak{f}} \\
        \implies  y^\top (K_{u\mathfrak{f}}^*K_{uu}^{-1}K_{u\mathfrak{f}} + \sigma^{2}I)^{-1}y &=\sigma^{-2}\nu^2 
        -\sigma^{-4}\bar{y}^\top(K_{uu}+\sigma^{-2}K_{u\mathfrak{f}}K_{u\mathfrak{f}}^*)^{-1}\bar{y}
    \end{align*}
    For the log determinant, we can use the matrix determinant lemma.
    \begin{align*}
        |\sigma^{2}I+K_{u\mathfrak{f}}^*K_{uu}^{-1}K_{u\mathfrak{f}}| &= |K_{uu} + \sigma^{-2}K_{u\mathfrak{f}}^*K_{u\mathfrak{f}}||K_{uu}|^{-1}|\sigma^{2}I_{n}|
    \end{align*}
    Finally, we write down the trace directly using the fact that $K_{uu}$ is diagonal.
    \begin{equation*}
        \tr(K_{\mathfrak{ff}} - K_{u\mathfrak{f}}^*K_{uu}^{-1}K_{u\mathfrak{f}}) = \sum_n\left(k(x_n, x_n) - \sum_m|c_m(x_n)|^2/\bar{k}_{mm}\right)
    \end{equation*}
    We can combine the above to get an easy to evaluate expression for $\F$, and replacing the inter-domain cross covariance matrices with their numerical approximations, we can evaluate the IFF objective $\mathfrak{F}$.

    Notably, $\nu^2 \in \mathbb{R}_{\geq 0}$ depends only on $y$, so can be precomputed and stored with only $O(N)$ cost, and $\bar{y} \in \mathbb{R}^M$ depends only on $x, y$ and $z$, so can also be precomputed and stored with $O(NM)$ cost. Similarly, the matrix $K_{u\mathfrak{f}}K_{u\mathfrak{f}}^*$ can be precomputed with $O(NM^2)$ cost. For large $N$, we split the data into chunks of $10\,000$ to save memory in this precompute stage. During optimisation, each calculation of this objective is reduced to the $O(M^3)$ cost associated with the inverse and log determinant calculations, which we perform cheaply after first performing the Cholesky decomposition.

    The prediction equation is
    \begin{equation}
        q(f(x_*)|x_*, x, y) = \int p(f(x_*)|x, z, u) q(u) du = \mathcal{N}(f(x_*)|K_{u*}^*K_{uu}^{-1} \mu_u, \: K_{**} - K_{u*}^*K_{uu}^{-1} K_{u*} + K_{u*}^*K_{uu}^{-1} \Sigma_u K_{uu}^{-1} K_{u*})
    \end{equation}
    where $*$ stands for $x_*$ in the subscripts. This is just the sparse, inter-domain version of \cref{eqn:pred_exact}, and $\mu_u, \Sigma_u$ are given in \cref{eqn:qu}.

\section{Real-valued features} \label{sec:real}
    As noted in \cref{sec:iff_convergence}, applying an invertible linear transformation to the features does not change inference or learning.

    To simplify the presentation, and generalise the result, we change notation slightly from the main text. Let the complex valued featured be $u'_{m_1, ...., m_D}$ which is the feature corresponding to the frequency $z_{m_1, ..., m_D}$, and let $z$ be antisymmetric in every axis (that is, $z_{-m_1, m_2, ..., m_D} = - z_{m_1, m_2, ..., m_D}$, etc) and require $m_d \neq 0$ for each $d$.  

    For example, if we use a regular grid, with $M^{1/D}$ an integer,
    $$z_{m_1, ..., m_D} = \begin{bmatrix} \varepsilon/2 + m_1\varepsilon - M^{1/D}/2 \\ \vdots \\ \varepsilon/2 + m_D \varepsilon - M^{1/D}/2\end{bmatrix} $$
    which indeed satisfies this property.

    Now,
    \begin{align*}
        e^{-i 2\pi x^\top\xi} &= \cos(2\pi x^\top\xi) + i \sin(2\pi x^\top \xi) \\
        &= \sum_{j =  0}^{D} (-i)^j \sum_{S \subseteq  \{1:D\}, |S| = j} \prod_{d \in S} \sin 2\pi x_d z_d \prod_{d \notin S} \cos 2\pi x_d z_d
    \end{align*}
    so let $u_{m_1, m_2, ..., m_D, S}$ for positive $m_d$ only, with $S \subseteq  \{1:D\}$, be defined as
    \begin{equation*}
        u_{m_1, ..., m_D, S} = \int_{z_D-\varepsilon/2}^{z_D+\varepsilon/2}\dots \int_{z_1-\varepsilon/2}^{z_1+\varepsilon/2}\int f(x) \prod_{d \in S}\sin 2\pi \xi_d x_d \prod_{d \notin S} \cos 2\pi \xi_d x_d 
        \,dx d\xi_1 \dots d\xi_D
    \end{equation*}
    so that 
    \begin{equation*}
        u'_{m_1,...m_D} = \sum_S (-i)^{|S|} u_{|m_1|, \dots, |m_D|, S} \prod_{d \in S}{\sgn(m_d)}.
    \end{equation*}
    \begin{lemma}
        The real representation is equivalent to the complex representation used the main text.
    \end{lemma}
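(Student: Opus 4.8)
The plan is to exhibit an explicit invertible linear transformation $T$ with $u = Tu'$ between the complex features $\{u'_{m_1,\ldots,m_D}\}$ and the real features $\{u_{n_1,\ldots,n_D,S}\}$, and then invoke the observation recorded in \cref{sec:iff_convergence} that applying an invertible linear map to the features leaves the collapsed objective \cref{eqn:vfe}, the optimal $\mu_u,\Sigma_u$, and the posterior predictive \cref{eqn:predict} unchanged. The coefficient formula relating the two families is already stated in the text; the work is to verify it and to check that the resulting $T$ is invertible.

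First I would verify the expansion
\[
  e^{-i 2\pi x^\top\xi} = \prod_{d=1}^D\bigl(\cos 2\pi x_d\xi_d - i\sin 2\pi x_d\xi_d\bigr) = \sum_{S\subseteq\{1:D\}} (-i)^{|S|}\prod_{d\in S}\sin 2\pi x_d\xi_d\prod_{d\notin S}\cos 2\pi x_d\xi_d ,
\]
substitute it into the definition of $u'_{m_1,\ldots,m_D}$, and use linearity of the iterated integral to write $u'_{m_1,\ldots,m_D}$ as $\sum_S(-i)^{|S|}$ times the box integral of $f$ against the corresponding sine/cosine product over the $\varepsilon$-cube centred at $z_{m_1,\ldots,m_D}$. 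Then, for each coordinate $d$ with $m_d<0$, I would change variables $\xi_d\mapsto-\xi_d$: by antisymmetry of $z$ this maps $[z_d-\varepsilon/2,z_d+\varepsilon/2]$ onto $[|z_d|-\varepsilon/2,|z_d|+\varepsilon/2]$ with unit Jacobian, contributes a factor $-1=\sgn(m_d)$ from each sine factor ($d\in S$) and $+1$ from each cosine factor ($d\notin S$). Collecting these factors over all negative coordinates yields exactly
\[
  u'_{m_1,\ldots,m_D} = \sum_S (-i)^{|S|}\, u_{|m_1|,\ldots,|m_D|,S}\prod_{d\in S}\sgn(m_d) ,
\]
the stated relation.

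It remains to show $T$ is invertible, which I expect to be the only real content. Because $z$ is antisymmetric in every axis and each $m_d\neq 0$, the complex index set decomposes into orbits of size $2^D$ under coordinatewise sign flips, and each orbit $\{u'_{\pm n_1,\ldots,\pm n_D}\}$ is carried by the $2^D$ real features $\{u_{n_1,\ldots,n_D,S}\}_{S\subseteq\{1:D\}}$ sharing those absolute frequencies, so $T$ is block diagonal. Indexing a block's rows by the sign vector $s\in\{\pm1\}^D$ and columns by $S$, the entry $(-i)^{|S|}\prod_{d\in S}s_d$ factorizes as the Kronecker product $\bigotimes_{d=1}^D\left(\begin{smallmatrix}1 & -i\\ 1 & i\end{smallmatrix}\right)$, each factor having determinant $2i\neq 0$, so every block — hence $T$ — is invertible (any extra per-frequency normalisation by a power of $s(z_m)$ is a further invertible diagonal factor and changes nothing). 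Since $u'$ and $u=Tu'$ are jointly Gaussian linear functionals of $f$, we get $K_{u\mathfrak f}=TK_{u'\mathfrak f}$ and $K_{uu}=TK_{u'u'}T^*$, and the argument of \cref{sec:iff_convergence} applies verbatim to conclude equivalence. The main obstacle is purely bookkeeping: keeping the sign factors from the change of variables consistent with the subset/sign-vector indexing; once that is aligned, invertibility is immediate from the tensor structure.
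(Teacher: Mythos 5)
Your proof is correct and follows the same overall skeleton as the paper's: both reduce the claim to showing that the coefficient matrix relating $u'$ to $u$ is invertible, and then appeal to the invariance of inference and learning under invertible linear transformations recorded in \cref{sec:iff_convergence}. The difference is in how invertibility is established. The paper fixes the absolute values $|m_1|,\dots,|m_D|$ and argues linear independence of the $2^D$ rows of the block directly, tracking how entry signs flip according to the parity of $|S\cap S'|$; you instead exhibit each block as the Kronecker product $\bigotimes_{d=1}^D\bigl(\begin{smallmatrix}1 & -i\\ 1 & i\end{smallmatrix}\bigr)$ and conclude from the nonzero determinant of each $2\times 2$ factor. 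Your route is arguably tighter: the paper's step that rows ``are only linearly dependent if they have all their signs flipped or all their signs not flipped'' implicitly reduces general linear dependence among $2^D$ sign-patterned rows to pairwise proportionality, and the justification it would need is exactly the orthogonality/tensor structure that your factorization makes explicit. You also verify the stated coefficient identity (the paper asserts it without proof) via the product expansion of $e^{-i2\pi x^\top\xi}$ and the coordinatewise change of variables $\xi_d\mapsto-\xi_d$, which is where the $\prod_{d\in S}\sgn(m_d)$ factors arise, and your parenthetical observation that any per-frequency normalisation by a power of $s(z_m)$ is a further invertible diagonal factor correctly accounts for the spectral-density weighting that the appendix's definition of the real features leaves implicit.
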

    \begin{proof}
        It suffices to show that this transformation can be expressed as a matrix with linearly independent rows. 
        The row corresponding to each ${u'}_{m_1, \dots, m_D}$ has only non-zero entries for $u_{|m_1|, ..., |m_D|, S}$ for any $S$. 
        Hence, if the absolute values of $m_1, ..., m_D$ differ, then the rows are linearly independent. Now, suppose the absolute 
        values are fixed and consider an arbitrary collection of indices $S' \subseteq  \{1:D\}$ to have negative sign. This corresponds to 
        a particular ${u'}$, hence a particular row of the matrix. Then the non-zero entries in the row corresponding to $S'$ each 
        correspond to a choice of $S$, and the sign is flipped (relative to the case where each $m_d$ is positive) if $|S \cap  S'|$ is odd.
        
        The rows are only linearly dependent if they have all their signs flipped or all their signs not flipped. That is, if there 
        exists $S' \neq S''$ such that $|S \cap  S''|$ and $|S \cap  S'|$ have the same parity for all $S \subseteq  \{1:D\}$. But since they are distinct, 
        there must be at least one $d \in \{1:D\}$ which is in $S'$ but not $S''$ (or vice versa) and so for $S=\{d\}$, the parity differs. 
    \end{proof}

    \section{Convergence} \label{sec:proofs}
    We follow the steps of \cref{sec:iff_convergence}, generalising to higher dimensions and filling in the details.

    Recall that inference and learning with with the modified features $u_3$ and their approximation $\hat{u}_3$ is equivalent to inference and learning with the features we use in practice. In this section, for brevity, we use $\hat{u}=\hat{u}_3$, and $ u= u_3$.

    Following from \cite{burt2019}, the gap between the log marginal likelihood and the training objective is bounded as 
    \begin{align}
        \mathbb{E}_y[D_{KL}(q(f)\mid\mid p(f|y))] &= \mathbb{E}_y[\Lcal - \F] \leq \frac{t}{\sigma^2}  \label{eqn:burtlimitsupp}\\
        t &= \tr{(K_{\mathfrak{ff}} - \underbrace{K_{u\mathfrak{f}}^*K_{uu}^{-1} K_{u\mathfrak{f}}}_{Q_{\mathfrak{ff}}}) } \label{eqn:burttracesupp}
    \end{align}
    when $u$ are valid inducing features, and the data $y$ are generates according to the \cref{eqn:gen}. We defer the effect of approximating with $\hat{u}$ until later in the proof. First, we set out the technical assumptions.

    Let $k=v\sigma^2 \tilde{k}$ where $\tilde{k}(x, x)=1$ (that is, define $v$ as the signal to noise ratio). 
    
    \paragraph{A1} Assume that $\tilde{k}$'s spectral measure admits a density, and denote this by $\tilde{s}$ and assume that the density admits a tail bound
    \begin{equation}
        \int_{\rho}^\infty \dots \int_{\rho}^\infty   \tilde{s}(\xi)\,d\xi_1\dots d\xi_D \leq \beta \rho^{-qD}
    \end{equation}
    for any $\rho>0$ and some $\beta, q > 0$. 
    
    \paragraph{A2} The spectral density's second derivative is bounded.
    
    \paragraph{A3} The spectral density's first derivatives are bounded as
    \begin{equation}
        \frac{\partial s(\xi)}{\partial \xi} \leq 2 L s(\xi) \implies  \frac{\partial \sqrt{s(\xi)}}{\partial \xi} \leq L\sqrt{s(\xi)}
    \end{equation}
    for some $L > 0$ (where the second expression follows wherever $s(\xi) > 0$). For example, this would be satisfied if the spectral density and its first derivative are both bounded everywhere, which includes widely used covariance functions.

    \paragraph{A4} Finally, let the inducing frequencies $z_m = (-(M+1)/2 + m)\varepsilon$ in one dimensions, and an analogous regular grid in higher dimensions, with $M^{1/D} \in \mathbb{Z}$ even. That is, for a multi-index $m_{1:D}$, 
    \begin{equation}
        [z_{m_{1:D}}]_d = (-(M^{1/D}+1)/2 + m_d)\varepsilon \quad \text{ for } d \in \{1:D\}.
    \end{equation}
    Notationally, we usually use a single index $m$ and, for brevity, let 
    \begin{equation}
        \square_{m} = \prod_{d=1}^D \left[[z_m]_d-\varepsilon/2, [z_m]_d+\varepsilon/2\right), \quad \square = \bigcup_m \square_m
    \end{equation}

    We might wish to deviate from A4 in practice -- for example, to prioritise higher importance frequencies when using a finite number, or to vary $\varepsilon$ in each dimension or as a function of location. We note that the proofs which follow can be generalised to these cases, with the rate of convergence controlled by the largest width $\varepsilon$ used.

    In the rest of this section, we refer to these as the standard assumptions, and we now reiterate the definitions of $u$ and $\hat{u}$.
    \begin{align*}
        u = \langle \phi_{3,m}, f\rangle  &\stackrel{\text{def}}{=}  \varepsilon^{-1} \int_{\square_m}  \int f(x) e^{-i2\pi\xi x} \, dx / \sqrt{s(\xi)} \, d\xi = \varepsilon^{-1} \int_{\square_m}  \langle \phi_{1,\xi}, f\rangle   \sqrt{s(\xi)} \, d\xi \\
        c_m(x') = c_3(x', z_m) &= \varepsilon^{-1} \int_{\square_m} c_1(x', \xi) \sqrt{s(\xi)}\, d\xi \approx \sqrt{s(z_m)}e^{-i2\pi z_m x'} = \hat{c}_m(x') \\
        \bar{k}_{m,m'} = \bar{k}_3(z_m, z_{m'}) &= \varepsilon^{-2} \int_{\square_m}\int_{\square_{m'}} \bar{k}_1(\xi, \xi')s(\xi)\, d\xi' \, d\xi = \varepsilon^{-1} \delta_{m-m'}
    \end{align*}
    Then $\hat{u}$ is defined implicitly through the definition of $\hat{c}_m, \bar{k}_{mm}$ above. 

    \begin{lemma}[\cref{lemma:cov}]
        Under assumptions A3 and A4,
        \begin{equation}
            c_m(x) = \hat{c}_m(x)(1+O(\varepsilon^D))
        \end{equation}
    \end{lemma}
    \begin{proof}
        We first consider the 1D case. Let $\mathcal{E}_m(\xi) = \sqrt{s(\xi)} - \sqrt{s(z_m)}$. Then by Taylor's theorem, $|\mathcal{E}_m(\xi)| \leq L|\xi - z_m|\sqrt{s(\xi')}$ for some $\xi' \in \square_m$. In particular, let $\xi'=\arg\max_{\xi\in\square_m} \sqrt{s(\xi)}$. But we also have $\sqrt{\xi'} \leq \sqrt{z_m} + L\sqrt{s(\xi')} |\xi'-z_m| \leq \sqrt{z_m} + L\sqrt{s(\xi')}\varepsilon/2$. Then for $\varepsilon < 2/L$ it follows that
        \begin{equation}
            \sqrt{s(\xi')} \leq \sqrt{s(z_m)}\frac{1}{1-\frac{L\varepsilon}{2}} = \sqrt{s(z_m)} (1+ O(\varepsilon))
        \end{equation} 
        and so
        \begin{equation}
            |\mathcal{E}_m(\xi)| \leq L |\xi-z_m| \sqrt{s(z_m)}(1+O(\varepsilon)).
        \end{equation}

        Then, considering at first the upper bound
        \begin{align}
            c_m(x) &= \varepsilon^{-1} \int_{\square_m} \sqrt{s(\xi)} e^{-i2\pi \xi x} \, d\xi\\
            &=\varepsilon^{-1} \int_{\square_m} (\sqrt{s(z_m)} +\mathcal{E}_m(\xi)) e^{-i2\pi z_m x} e^{-i2\pi (\xi-z_m) x} \, d\xi \\
            &=\sqrt{s(z_m)}e^{-i2\pi z_m x} \varepsilon^{-1} \int_{\square_m} e^{-i2\pi (\xi-z_m) x} \, d\xi + \varepsilon^{-1} \int_{\square_m}\mathcal{E}_m(\xi)e^{-i2\pi \xi x}\,d\xi \\
            &\leq \hat{c}_m(x) \sinc(2\pi \varepsilon x) + \varepsilon^{-1} \int_{\square_m}|\mathcal{E}_m(\xi)\,d\xi \\
            &\leq \hat{c}_m(x) \left(\sinc(2\pi \varepsilon x) + \frac{L\varepsilon}{2}\right)
        \end{align}
        Here $\sinc(\alpha) = \sin(\alpha) / \alpha$. The $\sinc$ term is of constant order. The lower bound is found by subtracting the magnitude of the error term instead of adding. The result then follows.

        In higher dimensions, we follow the same argument, and the new upper bound is
        \begin{equation} \label{eqn:lemma-approx}
            c_m(x) \leq \hat{c}_m(x) \left(\sinc^D(2\pi \varepsilon x) + \frac{L\varepsilon^D}{2}\right)
        \end{equation}
        from which the result follows.
    \end{proof}

    \begin{remark}
        The error bound in \cref{eqn:lemma-approx} generally tightens as $\varepsilon$ falls, but loosen as $x$ increases, with the approximation value vanishing when $\varepsilon = 1/x$. 
    \end{remark}

    \begin{theorem}[\cref{thm:convergence} of the main text] \label{thm:convergence-full}
        Under the assumptions A1-A4, for any $\Delta, \delta > 0$, there exists $M_0, \alpha_0 > 0$ for all $M > M_0$, 
        \begin{equation*}
            \mathbb{P}\left[\frac{\Lcal - \mathfrak{F}}{N} \geq \frac{\Delta}{N}\right] \leq \delta
        \end{equation*}
        and with
        \begin{equation*}
            M \leq \left(\frac{\alpha}{\Delta\delta}N\right)^{\frac{q+3}{2q}}.
        \end{equation*}

    \end{theorem}
    \begin{proof}
        Let $\hat{t} = \tr(K_{\mathfrak{ff}} - \hat{K}_{u\mathfrak{f}}^*K_{uu}^{-1}\hat{K}_{u\mathfrak{f}}) = \tr(K_{\mathfrak{ff}}-\hat{Q}_{\mathfrak{ff}})$. Then we can show that $\hat{t}/N\sigma^2 \in O(M^{-2q/(q+3)})$ as follows.
        \begin{align}
            \frac{\hat{t}}{N\sigma^2} &= \frac{1}{N\sigma^2} \sum_n \left(\underbrace{k(x_n, x_n)}_{v\sigma^2} - \varepsilon^D \sum_m \hat{c}_m(x_n)\hat{c}_m^*(x_n)\right) \\
            &= v\left(1-\varepsilon^D\sum_m \tilde{s}(z_m)\right) \label{eqn:remarkable}\\
            &= v\left(1-\int_{\square}\tilde{s}(\xi)\, d\xi\right) + vE_1 \\
            &= 2v \int_{\mathbb{R}\backslash \square} \tilde{s}(\xi)\, d\xi + vE_1
        \end{align}
        where $E_1 = \int_{\square}\tilde{s}(\xi)\, d\xi - \varepsilon\sum_m \tilde{s}(z_m) + O(M\varepsilon^{3D})$. The integral term in the last line is in $O((M^{1/D}\varepsilon)^{-qD})$ by assumption A1, and $E_1 \in O(M\varepsilon^{3D})$ from standard bounds on the error of the midpoint approximation (since the second derivative of the integrand is bounded by assumption A2).
        
        We must have that $\varepsilon \to  0$ as $M \to  \infty $ to make the midpoint approximation asymptotically exact, yet we must have $M\varepsilon \to  \infty $ to ensure the features cover all frequencies. We optimise the trade-off between these two.

        In particular, let $\varepsilon = \varepsilon_0 M^{-p/D}$ for some $\varepsilon_0 > 0$ and $p \in (1/3, 1)$. Then,
        \begin{equation}
            \frac{\hat{t}}{N\sigma^2} \in O(M^{-q(1-p)} + M^{-(3p-1)}).
        \end{equation}
        The overall rate is asymptotically dominated by the worse of these two rates, so we optimise $p$ as
        \begin{equation}
            p = \arg\max_{p'} \min \{ q(1-p), 3p-1\} = \frac{q+1}{q+3}
        \end{equation}
        which is the $p$ which sets both rates equal at 
        \begin{equation*}
            \frac{2q}{q+3}.
        \end{equation*}
        Altogether we have
        \begin{equation}
            \frac{\hat{t}}{N\sigma^2} \in O(M^\frac{-2q}{q+3}).
        \end{equation}
        For bounding $\mathbb{E}[(\Lcal-\F)/N]$ in terms of $t$, we could immediately apply the result \cref{eqn:burtlimitsupp}. But for $\mathbb{E}[(\Lcal-\mathfrak{F})/N]$ we cannot, since $\hat{u}$ are not exact features. But following the proof of Lemma 2 of \cite{burt2019}, we have
        \begin{equation}
            \mathbb{E}_y\left[\frac{\Lcal - \mathfrak{F}}{N\sigma^2}\right] \leq \frac{\hat{t}}{N\sigma^2} + O(M\varepsilon^{3D})
        \end{equation}
        provided $\hat{Q}_{\mathfrak{ff}} \geq 0$ and $(\log|\hat{Q}_{\mathfrak{ff}} + \sigma^2 I| - \log |K_{\mathfrak{ff}}+\sigma^2I|)/N\sigma^2 < O(M\varepsilon^{3D})$.

        For the first condition, apply Bochner's theorem. Consider the covariance function 
        \begin{equation}
            q(x, x') = \sum_m \hat{c}_m(x)\hat{c}_{m'}^*/\hat{k}_{mm'}
        \end{equation} 
        which is used to form the elements of $\hat{Q}_{\mathfrak{ff}}$. Its Fourier transform is 
        \begin{equation}
            [\F q](\xi) = \varepsilon \sum_m s(z_m) \delta(\xi-z_m) 
        \end{equation}
        which is indeed a positive measure, so $q$ is a positive definite covariance function, so $\hat{Q}_{\mathfrak{ff}} \geq 0$.

        For the log determininant term, we have from \cref{lemma:cov} (which holds due to A3, A4) that the relative error of $\hat{Q}_{\mathfrak{ff}}$ from $Q_{\mathfrak{ff}}$ is symmetric and in $O(M\varepsilon^{3D})$, hence the error in the log determinant is in $O(NM\varepsilon^{3D})$; the scaled identity shift does not change the order of this relative error. Then,
        \begin{align}
            (\log |\hat{Q}_{\mathfrak{ff}} + \sigma^2I| - \log |K_{\mathfrak{ff}}+\sigma^2I|)/N\sigma^2 &\leq (\log |Q_{\mathfrak{ff}}+\sigma^2I| - \log|K_{\mathfrak{ff}}+\sigma^2I|)/N\sigma^2 + O(M\varepsilon^{3D})  \\
            \in O(M\varepsilon^{3D})
        \end{align}
        where the last step follows from $(\log |Q_{\mathfrak{ff}}+\sigma^2I| - \log|K_{\mathfrak{ff}}+\sigma^2I|) \leq 0$ (see proof of Lemma 2 of \cite{burt2019}). Thus, we have
        \begin{equation}
            \mathbb{E}_y\left[\frac{\Lcal - \mathfrak{F}}{N}\right] \in O(M^{-\frac{2q}{q+3}})
        \end{equation}
        Then the first part of the results follow by a straightforward application of Markov's inequality. For the second part, we replace the big $O$ notation with an explicit constant $\alpha_0>0$.
        \begin{align}
            \mathbb{P}\left[\frac{\Lcal-\mathfrak{F}}{N} \leq \frac{\Delta}{N}\right] = \delta &\leq \frac{\mathbb{E}[(\Lcal-\mathfrak{F})/N]}{\Delta/N} \\
            & \leq N\frac{\alpha}{\Delta}M^{-\frac{2q}{q+3}} \\
            &\implies  M \leq \left(\frac{\alpha}{\delta\Delta} N \right)^{\frac{q+3}{2q}}
        \end{align}
    \end{proof}
This result demonstrates that the objective we use for hyperparameter optimisation, even with the numerical approximation, converges at a reasonable rate to the log marginal likelihood, at least if the spectral density has sufficiently light tails. This means it is a good surrogate for learning when we can set $M$ large enough.

With the proper inducing features $u$, we can be reassured that the posterior predictives will not be too bad, since the whole process KL from the approximating to exact posterior is bounded as $\Lcal-\F$ \cite{matthews2016}. With $\hat{u}$, we require some additional reassurance, which the is the subject of the next result.
    \begin{theorem}[\cref{thm:prediction} of the main text]
        For the optimised $\mu_u, \Sigma_u$ (to maximise $\mathfrak{F}$), let the posterior predictive at any test point $x_*$ using the exact features $u$ have mean and variance $\mu, \Sigma$, and with the approximate features $\hat{u}$ have mean and variance $\hat{\mu}, \hat{\Sigma}$. Then, under assumptions A3 and A4,
        \begin{align}
            |\mu-\hat{\mu}| &\in O(M\varepsilon^{2D}) \\
            |\Sigma-\hat{\Sigma}| &\in O(M^2\varepsilon^{3D})
        \end{align}
        and in particular, allowing $\varepsilon \sim M^{-\frac{q+1}{D(q+3)}}$ as in the proof of \cref{thm:convergence-full}, we have
        \begin{align}
            |\mu-\hat{\mu}| &\in O\left(M^{-\frac{q-1}{q+3}}\right) \\
            |\Sigma-\hat{\Sigma}| &\in O\left(M{-\frac{q-3}{q+3}}\right)
        \end{align}
    \end{theorem}
This result shows that the predictive marginals using the approximation converge at a reasonable rate to those without the approximation, for any fixed $\mu_u, \Sigma_u$ as long as the spectral density is not too heavy tailed ($q > 3$). We note that we do not comment on the rate at which the optimal variational means and covariances converge to each other, nor the consequent rate of convergence for the posterior predictives according to each objective's optimal variational distribution.
    \begin{proof}
        The predictive distributions are
        \begin{align}
            q(f(x_*)) &= \mathcal{N}(f(x_*)|\underbrace{K_{u*}^*K_{uu}^{-1} \mu_u}_{\mu}, \: \underbrace{k(x_*, x_*) -K_{u*}^*K_{uu}^{-1} K_{u*} + K_{u*}^*K_{uu}^{-1} \Sigma_u K_{uu}^{-1}K_{u*}}_{\Sigma} \\
            \hat{q}(f(x_*)) &= \mathcal{N}(f(x_*)|\underbrace{\hat{K}_{u*}^*K_{uu}^{-1} \mu_u}_{\hat{\mu}}, \: \underbrace{k(x_*, x_*) -\hat{K}_{u*}^*K_{uu}^{-1} \hat{K}_{u*} + \hat{K}_{u*}^*K_{uu}^{-1} \Sigma_u K_{uu}^{-1}\hat{K}_{u*}}_{\hat{\Sigma}}.
        \end{align}
        Recall that $K_{uu}^{-1}=\varepsilon^D I$, and that $|[K_{u*}-\hat{K}_{u*}]_m| = |c_m(x_*) - c_m(x_*)| \in O(\varepsilon^D)$ by \cref{lemma:cov}. The result for the means follows straightforwardly.
        \begin{equation}
            |\mu-\hat{\mu}| = |(K_{u*}^*-\hat{K}_{u*}^*)K_{uu}^{-1} \mu_u| \in O(M\varepsilon^{2D})
        \end{equation}
        which follows since $K_{uu}^{-1}\mu_u$ is an $M$ dimensional vector with each element in $O(\varepsilon^D)$. For the covariance, we use the triangle inequality. 
        \begin{align}
            |\Sigma - \hat{\Sigma}| &\leq |{K}_{u*}^* K_{uu}^{-1} {K}_{u*} - \hat{K}_{u*}^* K_{uu}^{-1} \hat{K}_{u*} |  \\
            & \quad  + |K_{u*}^* K_{uu}^{-1} \Sigma_{u} K_{uu}^{-1} K_{u*} - \hat{K}_{u*}^* K_{uu}^{-1} \Sigma_u K_{uu}^{-1} \hat{K}_{u*}|
        \end{align}
        Now, each of the terms on the right hand side is a one dimensional marginal variance, so we rewrite them as follows.
        \begin{align}
            |{K}_{u*}^* K_{uu}^{-1} {K}_{u*} - \hat{K}_{u*}^* K_{uu}^{-1} \hat{K}_{u*}| &= \varepsilon^D \sum_m (|c_m(x_*)|^2 - |\hat{c}_m(x^*)^2|) \\
            &\in O(M\varepsilon^{2D}) \\
            |K_{u*}^* K_{uu}^{-1} \Sigma_{u} K_{uu}^{-1} K_{u*} - \hat{K}_{u*}^* K_{uu}^{-1} \Sigma_u K_{uu}^{-1} \hat{K}_{u*}| &= \varepsilon^{2D} \sum_{m, m'} [\Sigma_u]_{m, m'} (c_m(x_*)c_{m'}^*(x_*) - \hat{c}_m(x_*)\hat{c}_{m'}^*(x_*)) \\
            \in O(M^2 \varepsilon^{3D})
        \end{align}
        The first term is $\varepsilon\sum_m (|c_m(x_*)|^2 - |\hat{c}_m(x^*)^2|) \in O(M\varepsilon^{3D})$ following from the proof of \cref{thm:convergence-full}. The other two terms are of the same order $O(M\varepsilon^{2D})$ using \cref{lemma:cov}.
    \end{proof}
\section{Experimental Details} \label{sec:training_details}

        We include the code for the experiments and figures which can be referred to for full details.

        For \cref{fig:contour}, we used $N=1\,000$ data points, whose input locations were sampled from a zero mean Gaussian distribution with standard deviation $W_x/6$. We randomly sampled a signal standard deviation $\sigma_f \approx 0.5422$ and a signal to noise ratio (SNR) $v \approx 1.255$, and used a unit lengthscale ($\lambda =1$). To verify that the pattern is broadly consistent for different parameter choices, we also reproduced the figure with lengthscale $\lambda  = 0.5$, and, with both lengthscales, we used both half and four times the SNR. In each case, we set $W_x = 6\lambda ^{-1}\sqrt{N/2}$. FInally, we consider the case with uniformly sampled data. The results are plotted in \cref{fig:contour2,fig:contour3,fig:contour4,fig:contour5,fig:contour6,fig:contour7}.
    
        For the synthetic experiment, we generated $N=10\,000$ data points in 1 and 2 dimensions by samping from a GP with a 
        Gaussian or Matérn-5/2 covariance function, with unit (or identity) lengthscale, unit variance, and set the SNR to 
        $0.774$ (arbitrarily chosen, poor signal to noise ratio for a challenging dataset). In 1D we sample the training 
        inputs uniformly on a width $6\sqrt{N/2}$ centred interval. In 2D we do the same in each dimension, but with 
        width $5$. We then fit each model plotted, training using LBFGS and using the same initialisation in each case, other than necessary restrictions on the choice of covariance functions as described in the main text. The initial values were lengthscales of 0.2, and unit signal and noise variances. 
        We did multiple random trials, and plot the 2 standard deviation error bars; this is mainly for uncertainty in timing, 
        but for inducing points we also have uncertainty due to randomness in the inducing point (re)initialisation method. Inducing points with inducing inputs optimised takes far longer than the other methods, so was excluded.
    
        For the real-world experiments, we used a similar setup as the synthetic experiments in terms of initialisations. Guided by the synthetic results, we use the full rectangular grid of frequencies for VFF, but use a spherical mask for IFF. We set $\varepsilon$ as described in the main text. Comparably, for B-splines and VFF, we set the interval $[a, b]$ as $0.1$ wider than the data (that is, $a_d = \min_{n, d} x_{n, d}-0.1$, $b_d = \max_{n, d} x_{n, d}+0.1)$; we use fourth order B-splines. The experiments were generally run on CPU to avoid memory-related distortion of the results, with the exception of SKI, which was run on GPU since it depends on GPU execution for faster MVMs.

        We implemented IFF, VFF and spherical harmonics using \texttt{gpflow} \citep{gpflow}, which we also used for inducing points, reusing some of the code of \cite{burt2020b} for the $k$-DPP (re)initialisation method. The spherical harmonics implementation depends on the backend-agnostic implementation of the basis functions\footnote{\url{https://github.com/vdutor/SphericalHarmonics}}, though as noted in the main text, we were unable to produce comparable results. We used the \texttt{gpytorch} implementation of SKI \citep{gardner2018}. We use the publicly available Tensorflow 2 implementation for B-splines\footnote{\url{https://github.com/HJakeCunningham/ASVGP}}. 
    
        \begin{figure}
            \vskip 0.2in
            \begin{center}
            
                    \includegraphics[width=0.95\textwidth]{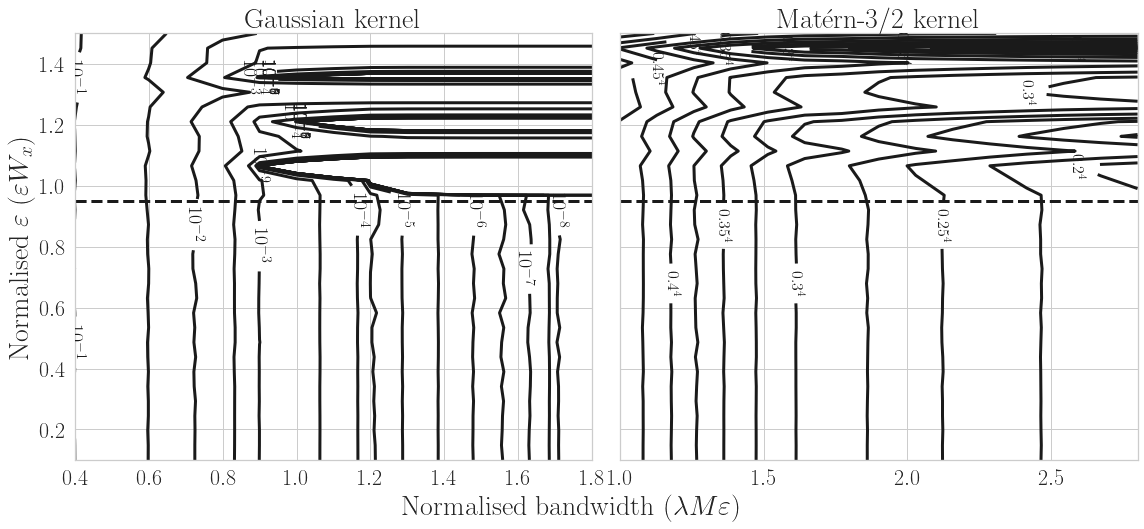}
                    \caption{As \cref{fig:contour}, but with half the SNR.}
                    \label{fig:contour2}
            
            \end{center}
            \vskip -0.2in
        \end{figure}

        \begin{figure}
            \vskip 0.2in
            \begin{center}
            
                    \includegraphics[width=0.95\textwidth]{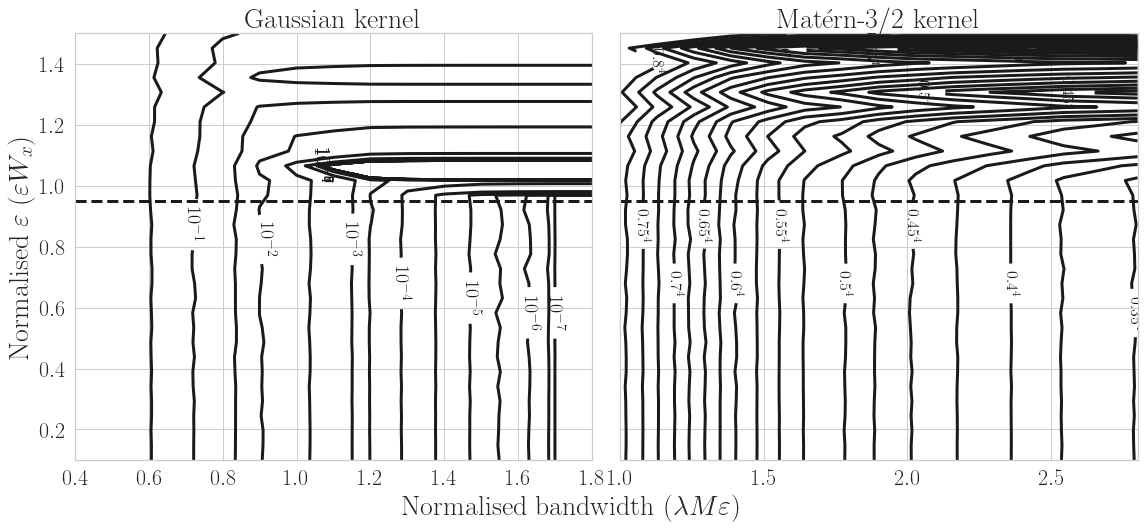}
                    \caption{As \cref{fig:contour}, but with four times the SNR.}
                    \label{fig:contour3}
            
            \end{center}
            \vskip -0.2in
        \end{figure}

        \begin{figure}
            \vskip 0.2in
            \begin{center}
            
                    \includegraphics[width=0.95\textwidth]{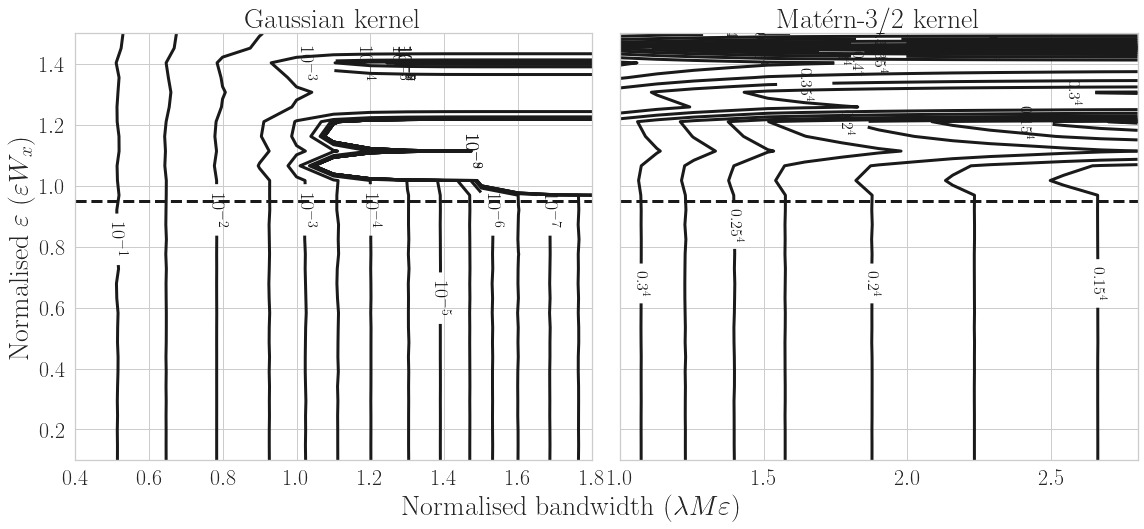}
                    \caption{As \cref{fig:contour}, but with half the lengthscale (twice the bandwidth).}
                    \label{fig:contour4}
            
            \end{center}
            \vskip -0.2in
        \end{figure}

        \begin{figure}
            \vskip 0.2in
            \begin{center}
            
                    \includegraphics[width=0.95\textwidth]{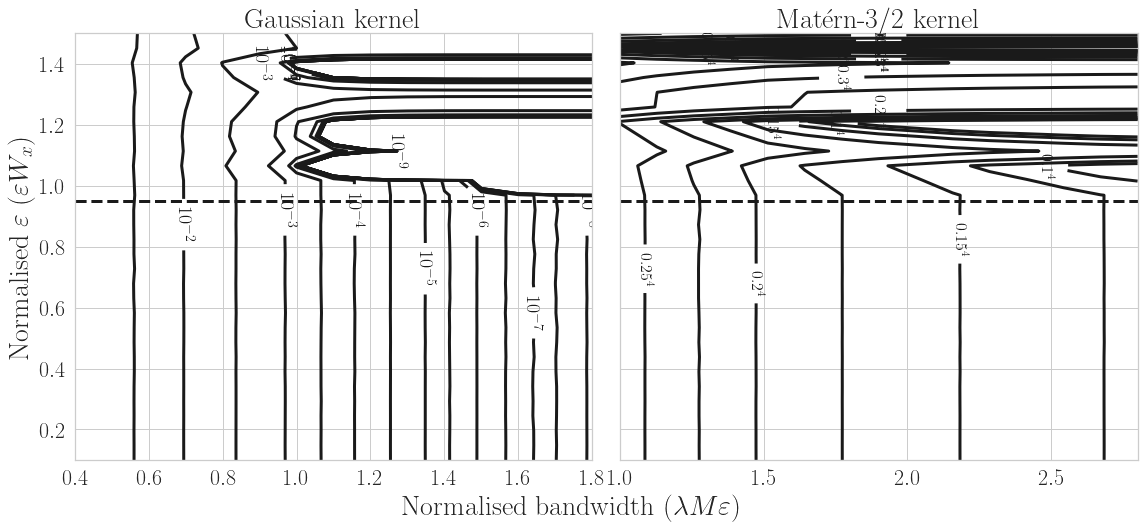}
                    \caption{As \cref{fig:contour}, but with half the lengthscale (twice the bandwidth) and half the SNR.}
                    \label{fig:contour5}
            
            \end{center}
            \vskip -0.2in
        \end{figure}

        \begin{figure}
            \vskip 0.2in
            \begin{center}
            
                    \includegraphics[width=0.95\textwidth]{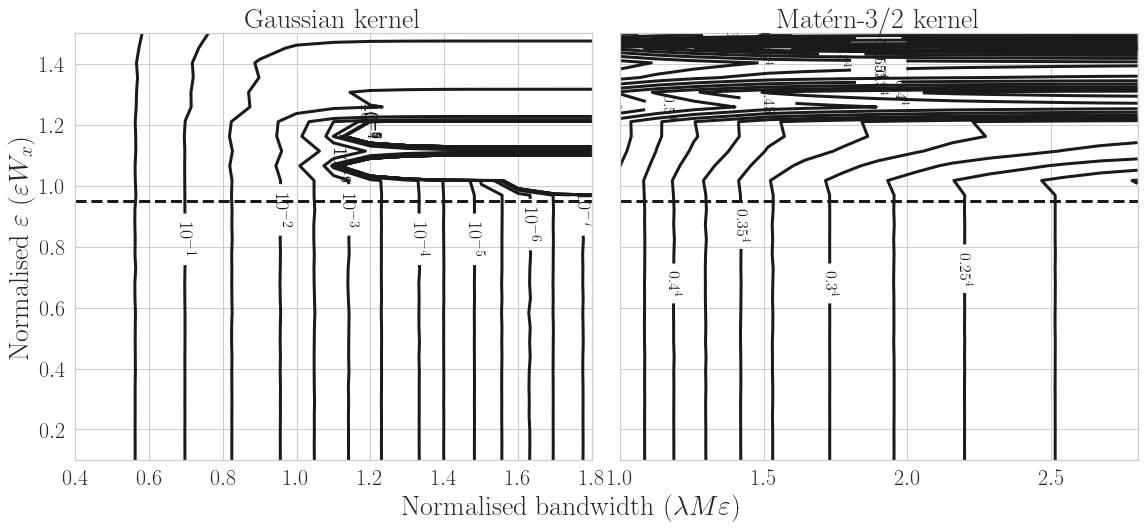}
                    \caption{As \cref{fig:contour}, but with half the lengthscale (twice the bandwidth) and four times the SNR.}
                    \label{fig:contour6}
            
            \end{center}
            \vskip -0.2in
        \end{figure}

        \begin{figure}
            \vskip 0.2in
            \begin{center}
            
                    \includegraphics[width=0.95\textwidth]{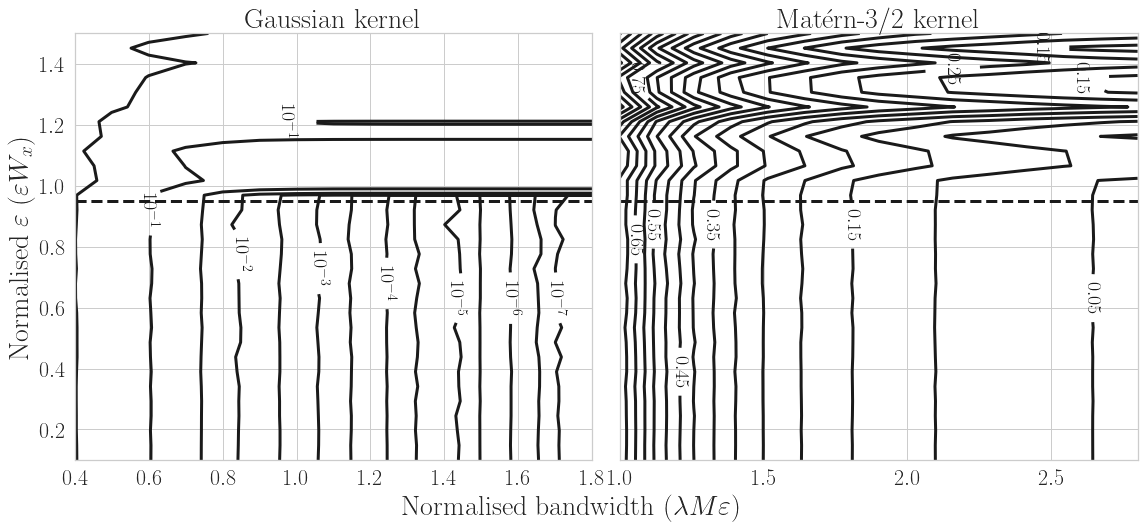}
                    \caption{As \cref{fig:contour}, but with data sampled uniformly on $[-W_x/2, W_x/2]$, and with higher SNR ($\approx 2.067$).}
                    \label{fig:contour7}
            
            \end{center}
            \vskip -0.2in
        \end{figure}

        \subsection{Real-world dataset information}
        In all cases, we normalise both the inputs and targets to unit mean and standard deviation in each dimension. However,
        we report the test metrics (RMSE and NLPD) averaged over test points but on the unnormalised scale. The number of training and test points, and the standard deviation of the outputs, for each dataset is reported in \cref{tab:summary}
    
        The precipitation dataset is a regularly gridded (in latitude and longitude) modelled precipitation normals in mm in the 
        contiguous United States for 1 January 2021 (publicly available with further documentation at 
        \url{https://water.weather.gov/precip/download.php}; note the data at the source is in inches). We downsample the data by 4 
        (by 2 in each dimension). The data is highly nonstationary, and so a challenging target for GP regression with typically used, usually
        stationary, covariance functions. In particular, the lengthscales are fairly large across the plains and in the southeast in general,
        but quite small near the Pacific coast, especially in the Northwest. For a stationary model, the high frequency content in that region 
        leads to a globally low lengthscale.
    
        The temperature dataset is the change in mean land surface temperature (\textdegree C). over the year ending February 2021
        relative to the base year ending February 1961 (publicly available from \url{https://data.giss.nasa.gov/gistemp/maps}). 
        It is also regularly gridded, over more of the globe.
    
        The house price dataset is a snapshot of house prices in England and Wales, which is not regularly gridded. We use a 
        random 20\% of the full dataset, and target the log price to compress the dynamic range. It is based on the publicly 
        available UK house price index (\url{https://landregistry.data.gov.uk/app/ukhpi}), and we enclose the exact dataset we use.

        \begin{table}[t]
            \centering
            \caption{Summary of real world datasets.}
            \vspace{2mm}
            
            \begin{tabular}{cccc}
                \toprule
                Dataset &   $N$ & Test points & Output standard deviation  \\
                \midrule
                Temperature & $12\,947$ & $3236$ & $2.766$\\
                Precipitation  & $23\,144$ &$5785$ &$58.855$  \\
                House price   & $106\,875$ &$26\,718$ & $0.642$\\ 
                \bottomrule
            \end{tabular}
            \label{tab:summary}
        \end{table}
    
\end{document}